\def\eqref#1{equation~\ref{#1}}
\def\1{\bm{1}}
\def\vs{{\bm{s}}}
\def\vx{{\bm{x}}}
\def\mA{{\bm{A}}}
\def\mM{{\bm{M}}}
\def\mX{{\bm{X}}}
\DeclareMathAlphabet{\mathsfit}{\encodingdefault}{\sfdefault}{m}{sl}
\SetMathAlphabet{\mathsfit}{bold}{\encodingdefault}{\sfdefault}{bx}{n}
\def\gE{{\mathcal{E}}}
\def\gG{{\mathcal{G}}}
\def\gV{{\mathcal{V}}}
\def\sR{{\mathbb{R}}}
\def\emA{{A}}
\definecolor{codegreen}{rgb}{0,0.6,0}
\definecolor{codegray}{rgb}{0.5,0.5,0.5}
\definecolor{codepurple}{rgb}{0.58,0,0.82}
\definecolor{backcolour}{rgb}{0.95,0.95,0.92}
\newcommand{\datasetFont}{\text}
\newcommand{\ours}{\datasetFont{LLMExplainer}\xspace}
\newcommand{\mutag}{\datasetFont{MUTAG}\xspace}
\newcommand{\fluc}{\datasetFont{Fluoride-Carbonyl}\xspace}
\newcommand{\alca}{\datasetFont{Alkane-Carbonyl}\xspace}
\newcommand{\bamv}{\datasetFont{BA-Motif-Volume}\xspace}
\newcommand{\bamc}{\datasetFont{BA-Motif-Counting}\xspace}
\newtheorem{theorem}{Theorem}
\newtheorem{problem}{Problem}
\newtheorem{hypothesis}{Hypothesis}
\theoremstyle{definition}
\newtheorem{definition}{Definition}[section]
\title{\ours: Large Language Model based Bayesian Inference for Graph Explanation Generation}
\author{
 \textbf{Jiaxing Zhang$^*$\textsuperscript{1}},
 \textbf{Jiayi Liu$^*$\textsuperscript{2}},
 \textbf{Dongsheng Luo\textsuperscript{3}},
 \textbf{Jennifer Neville\textsuperscript{2 4}},
 \textbf{Hua Wei\textsuperscript{5}}
\\
\small{
 \textsuperscript{1}New Jersey Insititute of Technology,
 \textsuperscript{2}Purdue University,
 \textsuperscript{3}Florida International University,
 \textsuperscript{4}Microsoft Research,
 \textsuperscript{5}Arizona State University
 }
\\
 \small{
    jz48@njit.edu,
    liu2861@purdue.edu,
    dluo@fiu.edu,
    neville@purdue.edu,
    hua.wei@asu.edu
 }
}
\begin{document}
\maketitle
\def\thefootnote{*}\footnotetext{These authors contributed equally to this work.}
\begin{abstract}
Recent studies seek to provide Graph Neural Network (GNN) interpretability via multiple unsupervised learning models. Due to the scarcity of datasets, current methods easily suffer from learning bias. To solve this problem, we embed a Large Language Model (LLM) as knowledge into the GNN explanation network to avoid the learning bias problem. We inject LLM as a Bayesian Inference (BI) module to mitigate learning bias. The efficacy of the BI module has been proven both theoretically and experimentally. We conduct experiments on both synthetic and real-world datasets. The innovation of our work lies in two parts: 1. We provide a novel view of the possibility of an LLM functioning as a Bayesian inference to improve the performance of existing algorithms; 2. We are the first to discuss the learning bias issues in the GNN explanation problem.
\end{abstract}

\section{Introduction}
\label{sec:intro}

Interpreting the decisions made by Graph Neural Networks (GNNs)~\citep{scarselli09gnnmodel} is crucial for understanding their underlying mechanisms and ensuring their reliability in various applications. As the application of GNNs expands to encompass graph tasks in social networks~\citep{fan19social, min21social}, molecular structures~\citep{chereda2019utilizing, mansimov19deepggnn}, traffic flows~\citep{wang20traffic, Li_Zhu_2021_traffic, wu19graphwave, ijcai2018p0505}, and knowledge graphs~\citep{sorokin-gurevych-2018-modeling-knowledgegraph}, 
GNNs achieve state-of-the-art performance in tasks including node classification, graph classification, graph regression, and link prediction. 
The burgeoning demand highlights the necessity of enhancing GNN interpretability to strengthen model transparency and user trust, particularly in high-stakes settings~\citep{yuan2022explainability,longa2022explaining}, and to facilitate insight extraction in complex fields such as healthcare and drug discovery~\citep{zhang2022trustworthy,wu2022survey,li2022survey}.

\begin{figure}
\begin{center}
     \includegraphics[width=0.4\textwidth]{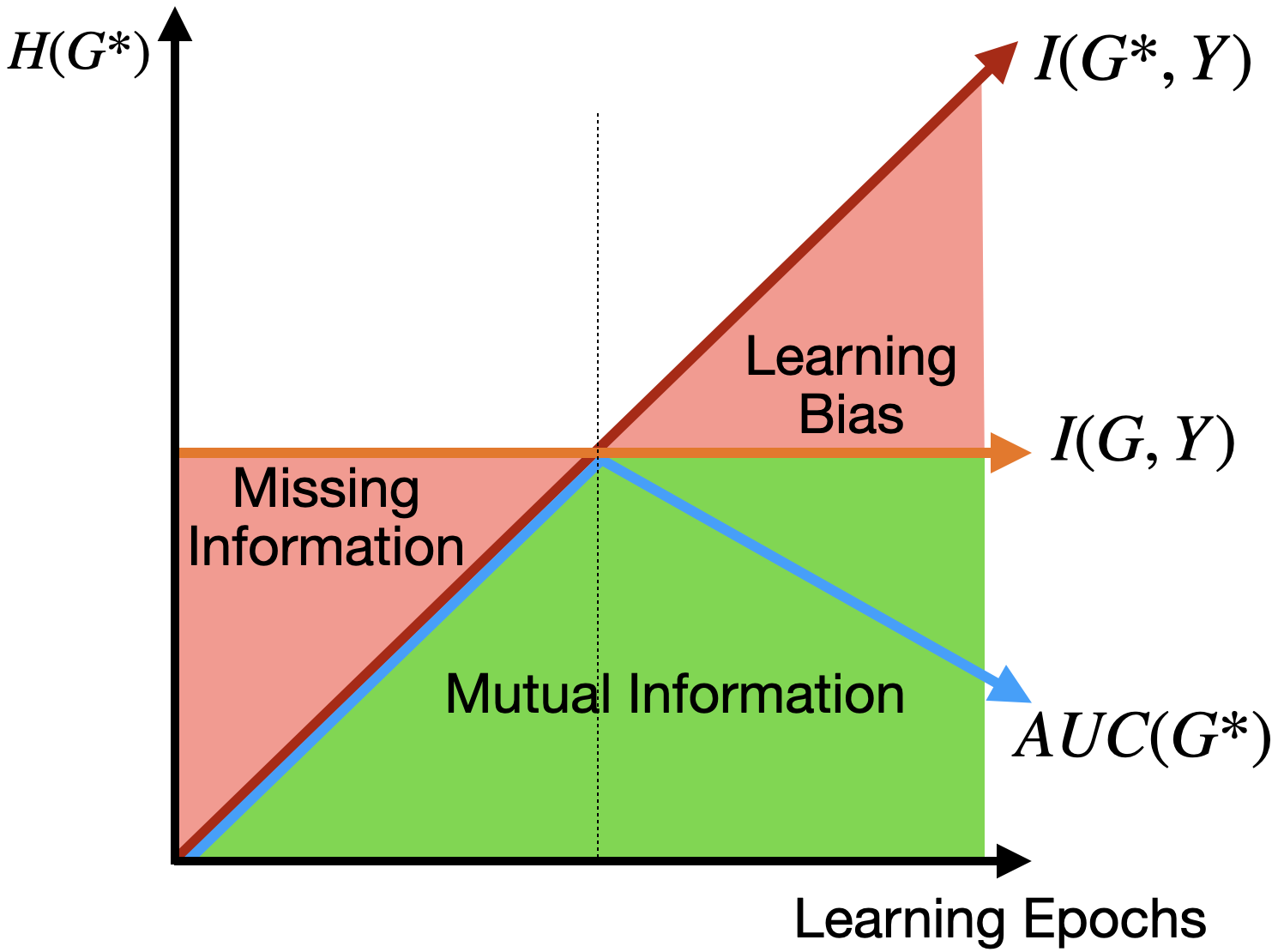}
    \caption{Intuitive visualization of learning bias.}
    \label{fig: intui}
\end{center}
\end{figure}

Recent efforts in explaining GNN~\citep{ying2019gnnexplainer, luo2020parameterized, zhang2023mixupexplainer} have sought to enhance GNN interpretability through multiple learning objectives, with a particular focus on the graph information bottleneck (GIB) method. GIB's goal is to distill essential information from graphs for clearer model explanations. However, the effectiveness of GIB hinges on the availability of well-annotated datasets, which are instrumental in accurately training and validating these models. Unfortunately, such datasets are rare, primarily due to the significant expert effort required for accurate annotation and occasionally due to the inherent complexity of the graph data itself. This scarcity poses a serious challenge, leading to a risk of learning bias in explaining GNN. Learning bias arises when the model overly relies on the limited available data, potentially leading to incorrect or over-fitted interpretations. We illustrate this phenomenon in Fig.~(\ref{fig: intui}) and provide empirical evidence in Fig.~(\ref{fig: RQ2}). 

As demonstrated in the figures, learning bias becomes increasingly problematic as the model continues to train on sparse data. Initially, the model might improve as it learns to correlate the sub-graph $G^*$ with the label $Y$, optimizing mutual information $I(G^*, Y)$. However, beyond a certain point, keeping optimizing $I(G^*, Y)$ leads to an over-reliance on the limited and possibly non-representative data available, thereby aggravating the learning bias. This situation is depicted through the divergence of mutual information and actual performance metrics, such as AUC, where despite higher mutual information, the practical interpretability and accuracy of the model decline. 

To mitigate the learning bias, current models often stop training early, a practice to prevent the exacerbation of learning bias. However, this approach is inherently flawed, especially in real-world applications lacking comprehensive validation datasets, leading potentially to under-fitting and inadequate model generalization. This emphasizes the need for innovative approaches to model training and interpretation that can navigate the challenges posed by sparse ground truth in explaining GNNs.

To address the challenges posed by sparse ground truth annotations and the consequent risk of learning bias in explaining GNNs, we propose \ours, a versatile GNN explanation framework that incorporates the insights from Large Language Model (LLM) into a wide array of backbone GNN explanation models, ranging from instance-level to model-level explanation models\citep{rgexp, ying2019gnnexplainer, luo2020parameterized, subgraphx, spinelli2022meta, wang2021causal, subgraphx, rgexp, chen2024generatingindistributionproxygraphs}. The LLMs act as a grader, and the evaluations from LLMs are then integrated into the model to inform a weighted gradient descent process. Specifically, to ensure a satisfactory level of explanation performance, we embed Bayesian Variational Inference into the original GNN explainers and use LLM as the prior knowledge in Bayesian Variational Inference. We prove that with the injection of LLM, \ours will mitigate the learning bias problem. Our experimental results show the effectiveness of enhancing the backbone explanation models with faster convergence and fortifying them against learning bias.

In summary, the main contributions of this paper are: 
\begin{itemize}
    \item We propose a new and general framework, \ours, which solves the problem of learning bias in the graph explanation process by embedding the Large Language Model into the graph explainer with a Bayesian inference process and improving the explanation accuracy. 
    \item We theoretically prove the effectiveness of the proposed algorithm and show that the lower bound of \ours is no less than the original bound of baselines. Our proposed method achieved the best performance through five datasets compared to the baselines.
    \item We are the first to discuss this learning bias problem in the domain of graph explanation and provide the potential of the Large Language Model as a Bayesian inference module to benefit the current works.
\end{itemize}

\section{Related Work}
\label{sec:relatedwork}

\subsection{Graph Neural Networks and Graph Explanations}
Graph neural networks (GNNs) are on the rise for analyzing graph structure data, as seen in recent research studies~\citep{dai2022towards, fan19social, hamilton2017inductive}. There are two main types of GNNs: spectral-based approaches~\citep{bruna2013spectral, kipf2016semi, tang2019chebnet} and spatial-based approaches~\citep{atwood2016diffusion, duvenaud2015convolutional, xiao2021learning}. Despite the differences, message passing is a common framework for both, using pattern extraction and message interaction between layers to update node embeddings. However, GNNs are still considered a black box model with a hard-to-understand mechanism, particularly for graph data, which is harder to interpret compared to image data. To fully utilize GNNs, especially in high-risk applications, it is crucial to develop methods for understanding how they work.

Many attempts have been made to interpret GNN models and explain their predictions~\citep{rgexp, ying2019gnnexplainer, luo2020parameterized, subgraphx, spinelli2022meta, wang2021causal}. These methods can be grouped into two categories based on granularity: (1) instance-level explanation, which explains the prediction for each instance by identifying significant substructures~\citep{ying2019gnnexplainer, subgraphx, rgexp}, and (2) model-level explanation, which seeks to understand the global decision rules captured by the GNN~\citep{luo2020parameterized, spinelli2022meta, bald19expgcn}. From a methodological perspective, existing methods can be classified as (1) self-explainable GNNs~\citep{bald19expgcn, dai21towards}, where the GNN can provide both predictions and explanations; and (2) post-hoc explanations~\citep{ying2019gnnexplainer, luo2020parameterized, subgraphx}, which use another model or strategy to explain the target GNN. In this work, we focus on post-hoc instance-level explanations, which involve identifying instance-wise critical substructures to explain the prediction. Various strategies have been explored, including gradient signals, perturbed predictions, and decomposition.

Perturbed prediction-based methods are the most widely used in post-hoc instance-level explanations. The idea is to learn a perturbation mask that filters out non-important connections and identifies dominant substructures while preserving the original predictions. For example, GNNExplainer~\citep{ying2019gnnexplainer} uses end-to-end learned soft masks on node attributes and graph structure, while PGExplainer~\citep{luo2020parameterized} incorporates a graph generator to incorporate global information. RG-Explainer~\citep{rgexp} uses reinforcement learning technology with starting point selection to find important substructures for the explanation.
\subsection{Bayesian Inference}
\citet{mackay1992bayesian} came up with the Bayesian Inference in general models, while \citet{graves2011practical} first applied Bayesian Inference to neural networks. Currently, Bayesian Inference has been applied broadly in computer vision (CV), nature language processing (NLP), etc~\citep{muller2021transformers, gal2015bayesian, xue2021bayesian, song2024imprint}. 

Bayesian Variational techniques have seen extensive uptake in Bayesian approximate inference. They adeptly reframe the posterior inference challenge as an optimization endeavor~\citep{wang2023scalable}. When compared to Markov Chain Monte Carlo, which is another Bayesian Inference method, Variational Inference exhibits enhanced convergence and scalability, making it better suited for tackling large-scale approximate inference tasks.

Due to the nature of Bayesian Variational Inference, it has been embedded into neural networks called Bayesian Neural Networks \citep{graves2011practical}. A major drawback of current Deep Neural Networks is that they use fixed parameter values, and fail to provide uncertainty estimations, resulting in a limitation in uncertainty. 
BNNs are extensively used in fields like active learning, Bayesian optimization, and bandit problems, as well as in out-of-distribution sample detection problems like anomaly detection and adversarial sample detection.
\subsection{Large Language Model}
Large Language Models (LLMs) have been widely used since 2023 \citep{bubeck2023sparks, brown2020language, zhou2022large}. Based on the architecture of Transformer\citep{vaswani2017attention}, LLMs have achieved remarkable success in various Natural Language Processing (NLP) tasks. LLMs have spurred discussions from multiple angles, including LLM efficiency \citep{liu2024cliqueparcel, wan2024tnt}, personalized LLMs \citep{mysore2023pearl, fang2024llm}, prompt engineering\citep{wei2022chain, song23c_interspeech}, fine tuning\citep{lai2024adaptive}, etc.

Beyond their traditional domain of NLP, LLMs have found extensive usage in diverse fields such as computer vision \citep{wang2024visionllm, dang2024realtime}, graph learning \citep{he2023harnessing}, and recommendation systems \citep{jin2023time, wu2024switchtab}, etc. By embedding LLMs into existing systems, researchers and practitioners have observed enhanced performance across various domains, underscoring the transformative impact of these models on modern AI applications.
\section{Preliminary}
\subsection{Notations and Problem Definition}
We summarize all the important notations in Table~\ref{app:tab:notation} in the appendix. We denote a graph as $G= (\mathcal{V}, \mathcal{E}; \mX, \mA)$, where $\mathcal{V} = \{v_1, v_2, ..., v_n\}$ represents a set of $n$ nodes and $ \mathcal{E} \subseteq \mathcal{V} \times \mathcal{V}$ represents the edge set. Each graph has a feature matrix $ \mX \in \sR^{n\times d} $ for the nodes. where in $\mX $, $ \vx_i  \in \sR^{1\times d} $ is the $d$-dimensional node feature of node $v_i$. $\mathcal{E}$ is described by an adjacency matrix $ \mA \in \{0,1\}^{n\times n}$. $\emA_{ij} = 1$ means that there is an edge between node $v_i$ and $v_j$; otherwise, $\emA_{ij} = 0$. 

For the graph classification task, each graph $G$ has a ground-truth label $\Bar{Y} \in \mathcal{C}$, with a GNN model $f$ trained to classify $G$ into its class, i.e., $f:(\mX, \mA) \mapsto \Bar{Y} \in \{1, 2, ..., C\}$. For the node classification task, each graph $G$ denotes a $K$-hop sub-graph centered around node $v_i$, with a GNN model $f$ trained to predict the label for node $v_i$ based on the node representation of $v_i$ learned from $G$. Since the node classification can be converted to computation graph classification task~\citep{ying2019gnnexplainer, luo2020parameterized}, we focus on the graph classification task in this work.
For the graph regression task, each graph $G$ has a label $\Bar{Y} \in \sR$, with a GNN model $f$ trained to predict $G$ into a regression value, i.e., $f:(\mX, \mA) \mapsto \Bar{Y} \in \sR$.
 
Informative feature selection has been well studied in non-graph structured data~\citep{Li17featureselect},  and traditional methods, such as concrete autoencoder~\citep{balin2019concrete}, can be directly extended to explain features in GNNs. In this paper, we focus on discovering important typologies. Formally, the obtained explanation $G^{*}$ is depicted by a binary mask $\mM \in \{0, 1\}^{n\times n}$ on the adjacency matrix, e.g., $G^{*} = ( \gV, \gE, \mA\odot \mM; \mX)$, $\odot$ means elements-wise multiplication. The mask highlights components of $G$ which are essential for $f$ to make the prediction. 

\subsection{Graph Information Bottleneck}
\label{sec:gib}
For a graph $G$ follows the distribution with $P_\mathcal{G}$, 
we aim to get an explainer function $\hat{G} = g_\alpha(G)$, where $\alpha$ are the parameters of explanation generator $g$. 
To solve the graph information bottleneck, previous methods~\citep{ying2019gnnexplainer, luo2020parameterized, zhang2023mixupexplainer}
are optimized under the following objective function: 
\begin{equation}
        G^* = \arg \min_{\hat{G}} I(G, \hat{G}) - \lambda I(Y, \hat{G}), \hat{G}\in \mathcal{\hat{G}}, 
    \label{eq: gib}
\end{equation}
where $G^*$ is the optimized sub-graph explanation produced by optimized $g$, $\hat G$ is the explanation candidate, and $\mathcal{\hat{G}} = \{\hat{G}\}$ is the set of $\hat{G}$ observations. During the explaining procedure, this objective function would minimize the size constraint $I(G, \hat G)$ and maximize the label mutual information $I(Y, \hat G)$. $\lambda$ is the hyper-parameter which controls the trade-off between two terms.

Since it is untractable to directly calculate the mutual information between prediction label $Y$ and sub-graph explanation $\hat G$, to estimate the objective, Eq.~(\ref{eq: gib}) is approximated as 
\begin{equation}
    G^* = \arg \min_{\hat{G}} I(G, \hat{G}) - \lambda I(Y, f(\hat{G})), \hat{G}\in \mathcal{\hat{G}}.
    \label{eq: sim_gib_1}
\end{equation}
Since $G^*$ is generated by $g_{\alpha}(\cdot)$, instead of optimize $\hat G$, we optimize $\alpha^*$ with 
$h(\alpha, \lambda, G, f) = I(G, g_{\alpha}(G)) - \lambda I(Y, f(g_{\alpha}(G)))$, then we have $\alpha^* = \arg \min h(\alpha, \lambda, G, f)$, where $I(Y, G^*) \sim I(Y, f(G^*))$ and $f$ is the pre-trained GNN model. 

\section{Methodology}
\label{sec:method}
Fig.~(\ref{fig:main}) presents an overview of the structure of \ours. The previous method is depicted on the left, while \ours incorporates a Bayesian Variational Inference process into the entire architecture, utilizing a large language model as the grader.
(In the previous explanation procedure, the explanation sub-graph is generated via an explanation model to explain the original graph and the to-be-explained prediction model. The explanation model is optimized by minimizing the size constraint $I(G, \hat{G})$ and maximizing the label mutual information $I(Y, \hat{Y})$, within $h(\alpha, \lambda, G, f)$, which is introduced in Section~\ref{sec:gib}. 

In our proposed framework, after generating the explanation sub-graph, we evaluate it through a Bayesian Variational Inference process, which is realized using a Large Language Model agent acting as a human expert. The enhanced explanation $\hat G$ is then produced using Eq.~(\ref{eq:fitting}). Note that, with the introduction of the Bayesian Variational Inference process, the previous distribution for $\alpha$ in $g_{\alpha}(\cdot)$ will shift to $\beta$ in $g'_{\beta}(\cdot)$. Finally, we optimize the explanation model with new $I(G, \hat G)$ and $I(Y, \hat Y)$ within $h(\beta, \lambda, G, f)$. 
We provide detailed formulas for the LLM-based Bayesian Variational Inference in Section~\ref{sec: BI} and a detailed prompt-building procedure in Section~\ref{sec: prompt}. Our training procedure is provided in Algorithm.~(\ref{alg: train}).
\begin{figure*}
\begin{center}
    \includegraphics[width=0.95\textwidth]{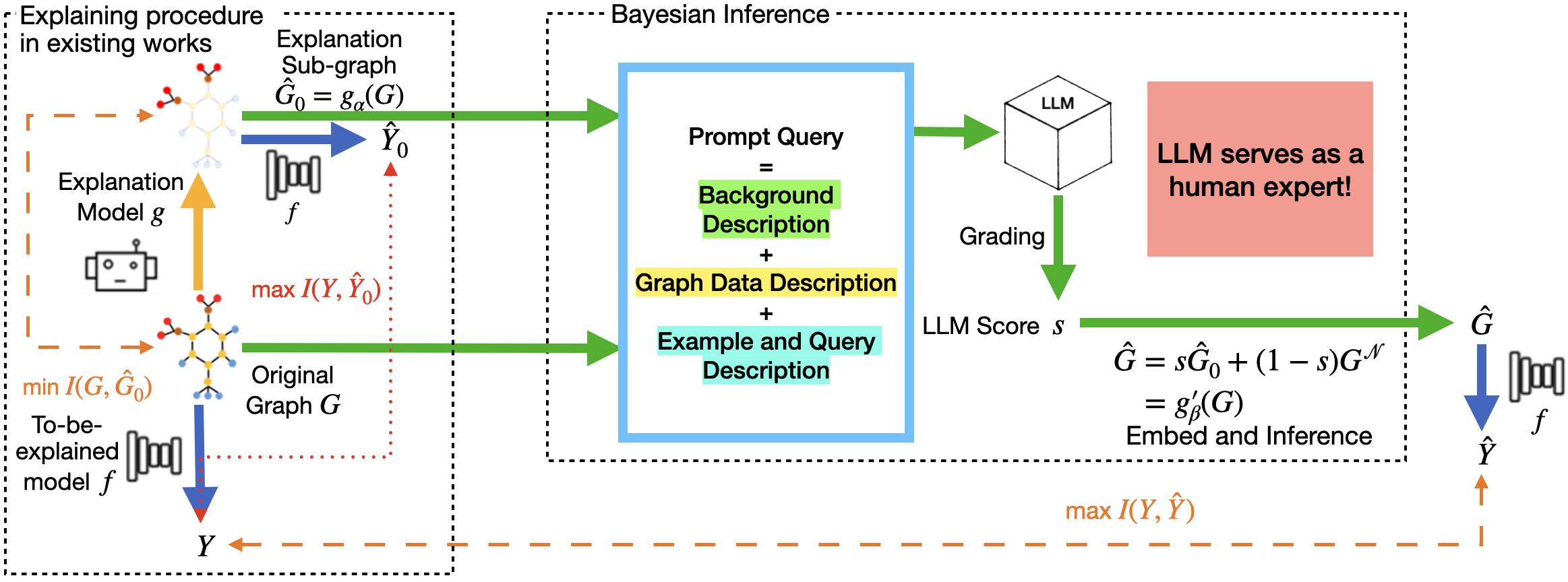}
    \caption{The architecture of our proposed framework for graph explaining and Bayesian inference on one graph sample in one epoch. 
    $\hat G_0$ is the sub-graph explanation candidate before Bayesian inference and $\hat Y_0 = f(\hat G_0)$ is the prediction label for $\hat G_0$. The explaining procedure in existing works on the left part optimizes the sub-graph explanation by minimizing the size constraint $I(G, \hat{G}_0)$  and maximizing the mutual information $I(Y, \hat{Y}_0)$, 
    which would face the learning bias problem after epochs. We introduced the Bayesian inference together with the LLM, which serves as a human expert to produce the embedded graph $\hat G$ and replace $I(Y, \hat Y_0)$ 
    with $I(Y, \hat Y)$in the objective function, where $\hat Y = f(\hat G)$ is the prediction label for $\hat G$. 
    The detailed illustration of prompting is shown in Fig.~(\ref{fig:prompt}). 
    } 
    \label{fig:main}
    \vspace{-0.6cm}
\end{center}
\end{figure*}









\subsection{Bayesian Variational Inference In Explainer}\label{sec: BI}
The injection of Bayesian Variational Inference into the GNN Explainer helps mitigate the learning bias problem. In this section, we will provide details on how we achieve this goal and present a theoretical proof for our approach. We begin by injecting the knowledge learned from the LLM as a weighted score and adding the remaining objective with a graph noise. Instead of adopting weighted noise, or gradient noise\citep{neelakantan2015adding}, we choose random Gaussian noise in this paper\citep{graves2011practical}.
\begin{problem} [Knowledge Enhanced Post-hoc Instance-level GNN Explanation]
\label{prob:exp}
Given a trained GNN model $f$ and a knowledgeable Large Language Model $\phi$, for an arbitrary input graph $G= (\mathcal{V}, \mathcal{E}; \mX, \mA)$, the goal of post-hoc instance-level GNN explanation is to find a sub-graph $G^{*}$ with Bayesian Inference embedded explanation generator $g'_{\beta}(\cdot)$, that can explain the prediction of $f$ on $G$ with the Large Language Model grading $s = \phi(\hat{G}, G)$, $s\in [0,1]$ in circle, as $\hat{G} = g'_{\beta}(G)= s g_{\alpha}(G) + (1-s ) G^\mathcal{N}$, $\beta^* = h(\beta, \lambda, G, f)$, where $\beta$ is the parameter for $g'(\cdot)$ and $g'_\beta$ is the explainer model integrated with the Bayesian inference procedure.  
\end{problem}
The distribution of $\alpha$ and $\beta$ will be different.
Suppose we add the fitting score as a posterior knowledge. Then we will have the prior probability as $P(\hat{G}|
\alpha)$, and the posterior probability as $Pr(\hat{G}|\alpha, s)$. With variational inference, suppose we approximates $Pr(\hat{G}|\alpha, s)$ over a class of tractable distributions $Q$, with $Q(\hat{G}|\beta)$, then the approximation becomes
\begin{equation}
    G^* = \arg \min _{\hat{G}}\mathbb{E}_ {\hat{G}\sim~ Q(\beta)}[-\ln\frac{Pr(s|\hat{G})P(\hat{G}|\alpha)}{Q(\hat{G}|\beta)}]
\end{equation}
We denote the network loss $L(\hat{G},s)$ as $L(\hat{G},s) = -\ln Pr(s|\hat{G})$ \citep{graves2011practical}, then we will have variational energy $F$ as 
\begin{equation}
    F = \mathbb{E}_ {\hat{G}\sim~ Q(\beta)}L(\hat{G},s) + KL[Q(\beta)||P(\alpha)] 
\end{equation}
\begin{definition}
    We define error loss $L^E(\beta, s)$ and complexity loss $L^C(\beta, s)$ in which 
    \begin{equation}
        \begin{aligned}
            L^E(\beta, s) &=\mathbb{E}_ {\hat{G}\sim~ Q(\beta)}L(\hat{G},s)\\
            L^C(\beta, \alpha) &= KL[Q(\beta)||P(\alpha)]
        \end{aligned}
        \label{eq: lelc}
    \end{equation}
    Then we will have
    \begin{equation}
        \begin{aligned}
            L(\alpha, \beta, s) &= L^E(\beta, s) + L^C(\beta, \alpha)
        \end{aligned}
        \label{eq: loss_together}
    \end{equation}
    where $L(\alpha, \beta, s)$ is the \textit{Minimum Description length} form of variational energy $F$~\citep{rissanen1978modeling}.
\end{definition}

\begin{figure*}
\begin{center}
    \includegraphics[width=0.95\textwidth]{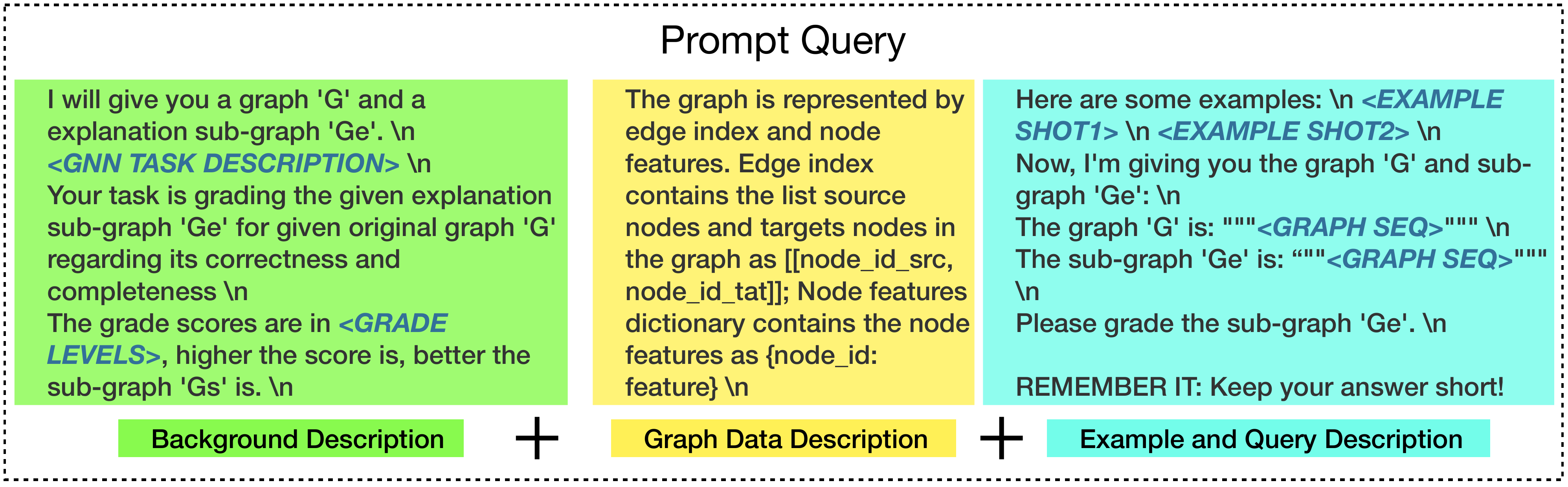}
    \scalebox{0.9}{
    \begin{tabular}{|l|l|}
    \hline
        Placeholder & Description \\
        \hline
        \textless GNN TASK DESCRIPTION\textgreater & A task-specific description to help LLM agent understand the graph task. \\
        \textless GRADE LEVELS\textgreater & A range of candidate scores for the LLM agent to choose from. \\
        \textless GRAPH SEQ\textgreater & A sequence consists of the edge index and node features of a graph sample. \\
        \textless EXAMPLE SHOT\textgreater & An example to teach LLM how to grade the explanation candidates. \\
        \hline
    \end{tabular}
    }
    \caption{The prompt construction of \ours in Bayesian Variational Inference Process. For an original graph $G$ and sub-graph explanation candidate $\hat G$, namely 'G' and 'Ge' in text, Large Language Models could tackle and reason the graph tasks with commonsense and expert knowledge, then grade the explanation candidate sub-graph. The placeholders would be replaced with the full text during prompting. Their usage is shown in the table.} 
    \label{fig:prompt}
    \vspace{-8mm}
\end{center}
\end{figure*}

\begin{hypothesis}
    We suppose that the $s$ is accurate enough to estimate the fitting between $\hat{G}$ and $G$. When $\hat{G}=G^*$, we have $s = 1$.
    \label{hypo: fitting_score}
\end{hypothesis} 
The hypothesis has been tested in  Fig.~(\ref{fig: RQ2}). In the following theorem, we prove that the objective function is at least sub-optimal to avoid the learning bias with accurate $s$.
\begin{theorem}
    When we reach the optimum $\hat{G}=G^*$, we will have the gradient $\Delta = \frac{\partial F}{\partial \hat{G^*}} \approx 0$, trapping $G^*$ at the optimum point to avoid learning bias.
    \label{the:opt}
\end{theorem}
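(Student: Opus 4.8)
The plan is to decompose the gradient along the two loss components of Eq.~(\ref{eq: loss_together}) and show that each summand vanishes at the optimum by invoking Hypothesis~\ref{hypo: fitting_score}. Writing $F = L^E(\beta,s) + L^C(\beta,\alpha)$ and applying the chain rule through the reparametrization $\hat{G} = g'_{\beta}(G) = s\,g_{\alpha}(G) + (1-s)\,G^{\mathcal{N}}$ with $s = \phi(\hat{G},G)$, I would write
\begin{equation}
    \Delta = \frac{\partial F}{\partial \hat{G}} = \frac{\partial L^E}{\partial \hat{G}} + \frac{\partial L^C}{\partial \hat{G}},
\end{equation}
and then argue that both terms are approximately zero when $\hat{G}=G^*$, at which point Hypothesis~\ref{hypo: fitting_score} forces $s=1$.

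For the error term, I would substitute $L(\hat{G},s) = -\ln Pr(s|\hat{G})$ into the definition $L^E(\beta,s) = \E_{\hat{G}\sim Q(\beta)} L(\hat{G},s)$ from Eq.~(\ref{eq: lelc}) and differentiate, obtaining a contribution proportional to $-\E_{\hat{G}\sim Q(\beta)}\big[ (\partial_{\hat{G}} Pr(s|\hat{G}))/Pr(s|\hat{G})\big]$, plus an implicit contribution routed through $s=\phi(\hat{G},G)$. The key observation is that, by Hypothesis~\ref{hypo: fitting_score}, a perfect fit $\hat{G}=G^*$ attains the maximal grade $s=1$, so $Pr(s|\hat{G})$ is maximized over the explanation space at that point; a maximizer of a differentiable likelihood is a stationary point, hence $\partial_{\hat{G}} Pr(s|\hat{G}) = 0$ and the error-term gradient vanishes at $G^*$.

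For the complexity term, I would track the variational posterior as $s\to 1$. When $s=1$ the noise channel $(1-s)\,G^{\mathcal{N}}$ is switched off and $g'_{\beta}(G) = g_{\alpha}(G)$, so the tractable family $Q(\hat{G}|\beta)$ collapses onto the prior $P(\hat{G}|\alpha)$, i.e.\ $\beta$ coincides with $\alpha$. Consequently $L^C(\beta,\alpha) = KL[Q(\beta)\|P(\alpha)] = 0$, the global minimum of a KL divergence; a nonnegative quantity attaining its minimum is stationary, so $\partial L^C/\partial \hat{G} = 0$ at $G^*$ as well. Summing the two vanishing contributions gives $\Delta \approx 0$, so gradient descent no longer displaces $G^*$ and the optimizer is trapped at the optimum, which is exactly the mechanism that keeps continued optimization of $I(Y,\hat{Y})$ from degenerating into learning bias.

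The main obstacle I anticipate is making the two stationarity claims rigorous rather than heuristic. First, one must justify that $\hat{G}=G^*$ is genuinely a maximizer of the grading likelihood $Pr(s|\hat{G})$ over the explanation space, so that its gradient—not merely its value—is controlled. Second, one must handle the implicit dependence $s=\phi(\hat{G},G)$ through the chain rule, i.e.\ the terms $(\partial F/\partial s)(\partial s/\partial \hat{G})$; this is delicate because $s=1$ sits on the boundary of $[0,1]$, so one cannot simply impose a first-order condition in $s$ and must instead argue that the noise weight $(1-s)$ and its gradient contribution both vanish as $s\to 1$. These are precisely the places where the ``$\approx$'' in the statement absorbs the residual error and where the accuracy assumption of Hypothesis~\ref{hypo: fitting_score} is doing the real work.
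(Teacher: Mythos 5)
Your proposal reaches the right conclusion and shares the paper's skeleton---the decomposition $F = L^E(\beta,s)+L^C(\beta,\alpha)$ of Eq.~(\ref{eq: loss_together}), the reparametrization of Eq.~(\ref{eq:fitting}), and Hypothesis~\ref{hypo: fitting_score} forcing $s\to 1$ at $\hat{G}=G^*$---but the mechanism by which you kill each term is genuinely different. The paper never invokes first-order optimality conditions: it computes explicit approximations of both losses as $s\to 1$, reducing the error contribution to $-sP(\hat{G}_0|\alpha)\ln Pr(s|\hat{G}_0)$ (Eq.~(\ref{eq: error_loss})) and Taylor-expanding the KL term of Eq.~(\ref{eq: network_loss}) down to $(1-s)(G^\mathcal{N}-\hat{G}_0)$, and then observes that these \emph{values} vanish because $Pr(s|\hat{G}_0)\to 1$ makes $\ln Pr \to 0$ while $(1-s)\to 0$ kills the noise term. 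You instead argue \emph{stationarity}: $Pr(s|\hat{G})\le 1$ is attained at $G^*$, so its $\hat{G}$-gradient vanishes there, and $KL[Q\|P]\ge 0$ collapses to $0$ when the noise channel shuts off, so it too sits at a stationary minimum. Your route is the more calculus-coherent one---the paper literally equates $\partial F/\partial \hat{G}_0$ with the sum of the loss integrands, conflating a loss value with its derivative, whereas your chain-rule expression $-(\partial_{\hat{G}} Pr)/Pr$ is an actual gradient---and your KL observation is in fact stronger than the paper's: KL is quadratic around its minimum, so the complexity gradient vanishes to first order rather than merely being of order $(1-s)$. What the paper's computation buys in exchange is weaker hypotheses and an explicit residual: it needs no differentiability of $Pr(s|\cdot)$ in $\hat{G}$ nor interiority of $G^*$, and it exhibits the rate at which $\Delta\to 0$ (terms of size $-\ln Pr$ and $1-s$), which is what the ``$\approx$'' in the theorem statement is absorbing. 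The two obstacles you flag---justifying that $G^*$ is an interior maximizer of the grading likelihood, and the implicit dependence $s=\phi(\hat{G},G)$ with $s=1$ on the boundary of $[0,1]$---are real, but the paper sidesteps both silently by treating $s$ as an external constant tending to $1$ and never differentiating through $\phi$; on these points your proposal is no less rigorous than the published proof, and more candid about where the slack lies.
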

\begin{proof}

The generation of $\hat{G}$ included two steps:
\begin{enumerate}
    \item Generate from the original generator $g_{\alpha}(\cdot)$ with $\hat{G}_0 = g_{\alpha}(G)$.
    \item Embed the fitting score with $\hat{G} = s\hat{G}_0 + (1-s 
        ) G^\mathcal{N},G^\mathcal{N} \sim \mathcal{N}(0, 1)$
    where $\hat{G}$ is the explanation sub-graph, $s$ is the LLM score of $\hat{G}$ with $s = \phi(\hat{G}, G)$, $G^\mathcal{N}$ is the Gaussian noise.
\end{enumerate}
Then we have the embedded inference network as
\begin{equation}
\begin{aligned}
    \hat{G} &= g'_{\beta}(G)= s g_{\alpha}(G) + (1-s ) G^\mathcal{N}
\end{aligned}
\label{eq:fitting}
\end{equation}
The distribution of $\hat{G}$ is denoted as $Q$, where $\hat{G}\sim Q(\beta)$. Then we can calculate $L^E(\beta, s)$ and $L^C(\beta, \alpha)$ in Eq.~(\ref{eq: lelc}) separately.
We will discuss the scenarios of error loss $L^E(\beta, s)$ and network loss $L^C(\beta, \alpha)$ when we have $\hat{G} = G^*$ and $s\to 1$.
\begin{equation}
    \begin{aligned}
        L^E(\beta, s) &=\mathbb{E}_ {\hat{G}\sim~ Q(\beta)}L(\hat{G},s)\\
        &= -\int P(\hat{G}|\beta)\cdot \ln Pr(s|\hat{G})d\hat{G}.\\
    \end{aligned}
    \label{eq: error_loss}
\end{equation}
When $s\to 1$, we have $L^E(\beta, s)\stackrel{s\to 1}{\approx}-\int sP(\hat{G_0}|\alpha)\cdot \ln Pr\left(s|\hat{G_0}\right)d\hat{G_0}$.

\begin{equation} 
    \begin{aligned}
        L^C(\beta, \alpha) &\sim KL[sg_{\alpha}(G) + (1-s) G^\mathcal{N}|| g_{\alpha}(G)]\\
        &=\int (s\hat{G}_0 + (1-s)G^\mathcal{N}) \\ 
        &\quad \log\left(\frac{s\hat{G}_0 + (1-s)G^\mathcal{N}}{\hat{G}_0}\right) d\hat{G}_0.\\
    \end{aligned}
    \label{eq: network_loss}
\end{equation}
When $s\to 1, \frac{1-s}{s}\frac{G^\mathcal{N}}{\hat{G}_0} \to 0$, we have $L^C(\beta, \alpha)\stackrel{s\to 1}{\approx} (1-s)(G^\mathcal{N}-\hat{G}_0)d\hat{G}_0$.

With Eq.~(\ref{eq: loss_together}), when $\hat{G_0}$ is optimal and $s\to 1$, we have 
\begin{equation}
    \begin{aligned}
        \Delta &=\frac{\partial F}{\partial \hat{G_0}} = -sP(\hat{G_0}|\alpha)\cdot \ln Pr\left(s|\hat{G_0}\right) + \\
                & \quad(1-s)(G^\mathcal{N}-\hat{G}_0)\\
    \end{aligned}
\end{equation}
Based on Hypothesis \ref{hypo: fitting_score}, we have $Pr(s|\hat{G_0})\to 1$, $\Delta \approx 0$.
Hence, when we reach the optimum sub-graph $G^*$, our algorithm will trap the optimum with the gradient $\Delta \to 0$ to mitigate learning bias. Due to page limitations, we present the full proof in the Appendix.
\end{proof}

\subsection{Prompting}
\label{sec: prompt}
As shown in Fig.~(\ref{fig:prompt}), we build a pair $(G, \hat G)$ into a prompt. It contains several parts: (1). We provide a background description for the task. \textbf{\textless GNN TASK DESCRIPTION\textgreater} is a task-specific description. For example: for dataset {\bamc}, it would be "The ground truth explanation sub-graph 'Ge' of the original graph 'G' is a circle motif."; for chemical dataset {\mutag}, it would be "The ground truth explanation sub-graph 'Ge' of the original graph 'G' is sub-compound and decides the molecular property of mutagenicity on Salmonella Typhimurium." \textbf{\textless GRADE LEVELS\textgreater} is a range of candidate scores according to the task, e.g.: $[0, 1]$. (2). We describe how we express a graph in text, which would help the LLM understand our graph sample. The \textbf{\textless GRAPH SEQ\textgreater} contains two parts: edge index and node features, which transport the graph-structure data into text, e.g.:edge index: $[[0,1], [1, 2], [2, 0]]$, node feature: $\{0: [3.6], 1: [2.4], 2:[9.9]\}$. (3). We then provide a few shots to the LLM. The \textbf{\textless EXAMPLE SHOT\textgreater} contains several pairs of to-be-grade candidates and their corresponding grades. We ask the LLM to grade our query candidate. (4). Finally, we regularize the answer of LLM to be a single number with the "REMEMBER IT: Keep your answer short!" prompt. We put the complete samples in the GitHub repository along with our code and data.

\section{Experimental Study}
\begin{table*}
  \setlength{\tabcolsep}{4.5pt}
  \vspace{-5mm}
  \caption{Explanation faithfulness in terms of AUC-ROC on edges under five datasets. The higher, the better. Our proposed method achieves consistent improvements over the backbone GIB-based explanation method and other baselines.} 
  \vspace{-2mm}
  \scalebox{0.9}{
  \begin{tabular}{c|ccc|cc}
  \hline
                    & \multicolumn{3}{c|}{Graph Classification}                                   &   \multicolumn{2}{c}{Graph Regression} \\
    \hline
                    & \mutag            & \fluc                 & \alca                 & \bamv                 & \bamc\\
    \hline
    GRAD            &$0.573_{\pm 0.000}$& $0.604_{\pm 0.000}$   & $0.496_{\pm 0.000}$   & $0.418_{\pm 0.000}$   & $0.516_{\pm 0.000}$\\
    ReFine          &$0.612_{\pm 0.004}$& $0.507_{\pm 0.003}$   & $0.630_{\pm 0.007}$   & $0.572_{\pm 0.006}$   & $0.742_{\pm 0.011}$ \\
    GNNExplainer    &$0.682_{\pm 0.009}$& $0.540_{\pm 0.002}$   &$0.532_{\pm 0.008}$    & $0.501_{\pm 0.009}$   & $0.504_{\pm 0.003}$ \\
    \hline
    PGExplainer     &$0.601_{\pm 0.151}$&   $0.512_{\pm 0.010}$ &  $0.586_{\pm 0.031}$  & $0.547_{\pm 0.040}$   & $0.969_{\pm 0.017}$\\
    + LLM           &$\mathbf{0.763_{\pm 0.106}}$&$\mathbf{0.719_{\pm 0.017}}$ &$\mathbf{0.791_{\pm 0.003}}$&$\mathbf{0.979_{\pm 0.009}}$&$\mathbf{0.990_{\pm 0.004}}$\\ 
    (improvement)   &      $+0.162$     &   $+0.207$            &  $+0.205$             & $+0.432$              & $+0.021$ \\
    \hline
  \end{tabular}
  }
  \vspace{-5mm}
  \label{tab:QEtable}
\end{table*}

We conduct comprehensive experimental studies on benchmark datasets to empirically verify the effectiveness of the proposed \ours. Specifically, we aim to answer the following research questions:

\begin{itemize}
    \item RQ1:   Could the proposed framework outperform the baselines in identifying the explanation sub-graphs for the to-be-explained GNN model?
    \item RQ2:   Could the LLM score help address the learning bias issue? 
    \item RQ3:   Could the LLM score reflect the performance of the explanation sub-graph; is it effective in the proposed method? 
\end{itemize}
\vspace{-2mm}

\subsection{Experiment Settings}
To evaluate the performance of {\ours}, we use five benchmark datasets with ground-truth explanations. These include two synthetic graph regression datasets: {\bamv} and {\bamc}~\citep{zhang2023regexplainer}, with three real-world datasets: {\mutag} \citep{kazius2005derivation}, {\fluc}~\citep{sanchez2020evaluating}, and {\alca} \citep{sanchez2020evaluating}. We take GRAD~\citep{ying2019gnnexplainer}, GNNExplainer~\citep{ying2019gnnexplainer}, ReFine~\citep{wang2021towards}, and PGExplainer~\citep{luo2020parameterized} for comparison. Specifically, we pick PGExplainer as a backbone and apply \ours to it. We follow the experimental setting in previous works \citep{ying2019gnnexplainer, luo2020parameterized, sanchez2020evaluating, zhang2023mixupexplainer} to train a Graph Convolutional Network (GCN) model with three layers. We use GPT-3.5 as our LLM grader for the explanation candidates.
To evaluate the quality of explanations, we approach the explanation task as a binary classification of edges. Edges that are part of ground truth sub-graphs are labeled as positive, while all others are deemed negative. We take the importance weights given by the explanation methods as prediction scores. An effective explanation technique should be able to assign higher weights to the edges within the ground truth sub-graphs compared to those outside of them. We utilize the AUC-ROC metric($\uparrow$) for quantitative evaluation. \footnote{Our data and code are available at:
\url{https://anonymous.4open.science/r/LLMExplainer-A4A4}}. 
\subsection{Quantitative Evaluation (RQ1)}

In this section, we compare our proposed method, {\ours}, to other baselines. Each experiment was conducted 10 times using random seeds from 0 to 9, with 100 epochs, and the average AUC scores as well as standard deviations are presented in Table~\ref{tab:QEtable}.
The results demonstrate that {\ours} provides the most accurate explanations among several baselines. Specifically, it improves the AUC scores by an average of $0.227/42.5\%$ on synthetic datasets and $0.191/34.1\%$ on real-world datasets. In dataset {\bamv}, we achieve $0.432/79.0\%$ improvement compared to the PGExplainer baseline. The reason is there is serious learning bias with PGExplainer on {\bamv}, which is shown in Fig.~(\ref{fig: RQ2}). The performance improvement in {\bamv} is not significant because of (1). the learning bias in this dataset is specifically slight; (2). The PGExplainer is well-trained at the epoch 100. Comparisons with baseline methods highlight the advantage of Bayesian inference and LLM serving as a knowledge agent in training.


\subsection{Qualitative Evaluation (RQ2)}
\begin{figure*}
    \vspace{-5mm}
    \centering
    \scalebox{0.95}{
    \begin{tabular}{ccccc}
        \subfigure{\includegraphics[width=0.19\textwidth]{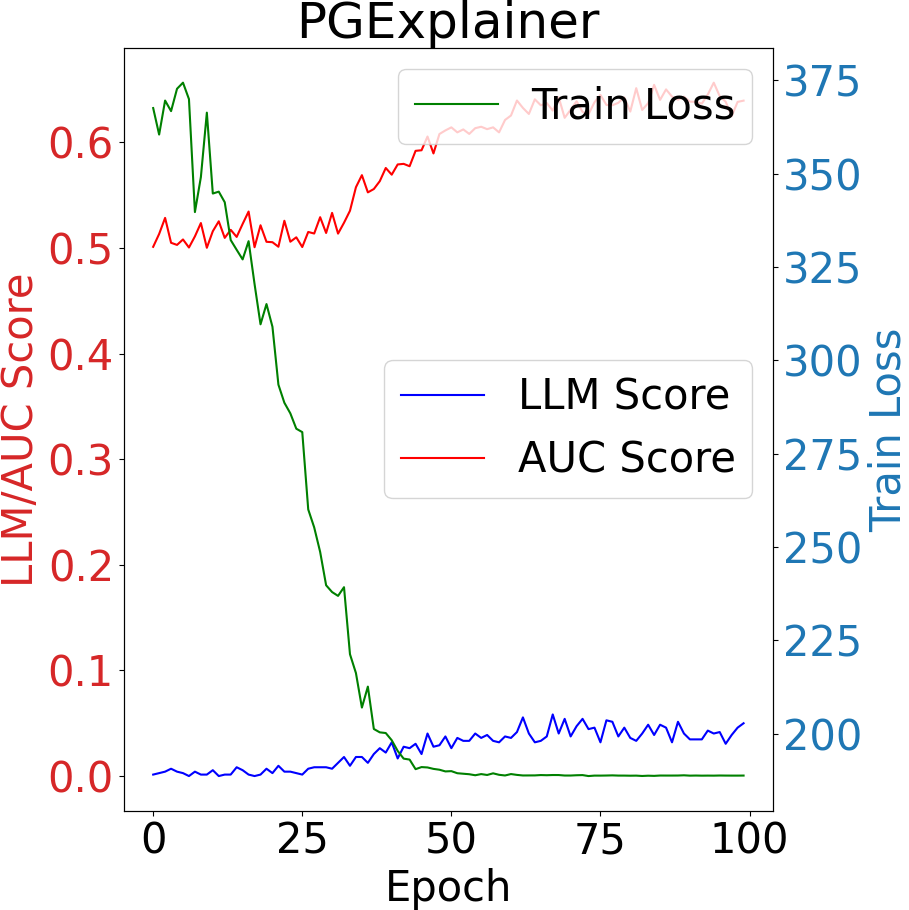}}  & 
        \subfigure{\includegraphics[width=0.19\textwidth]{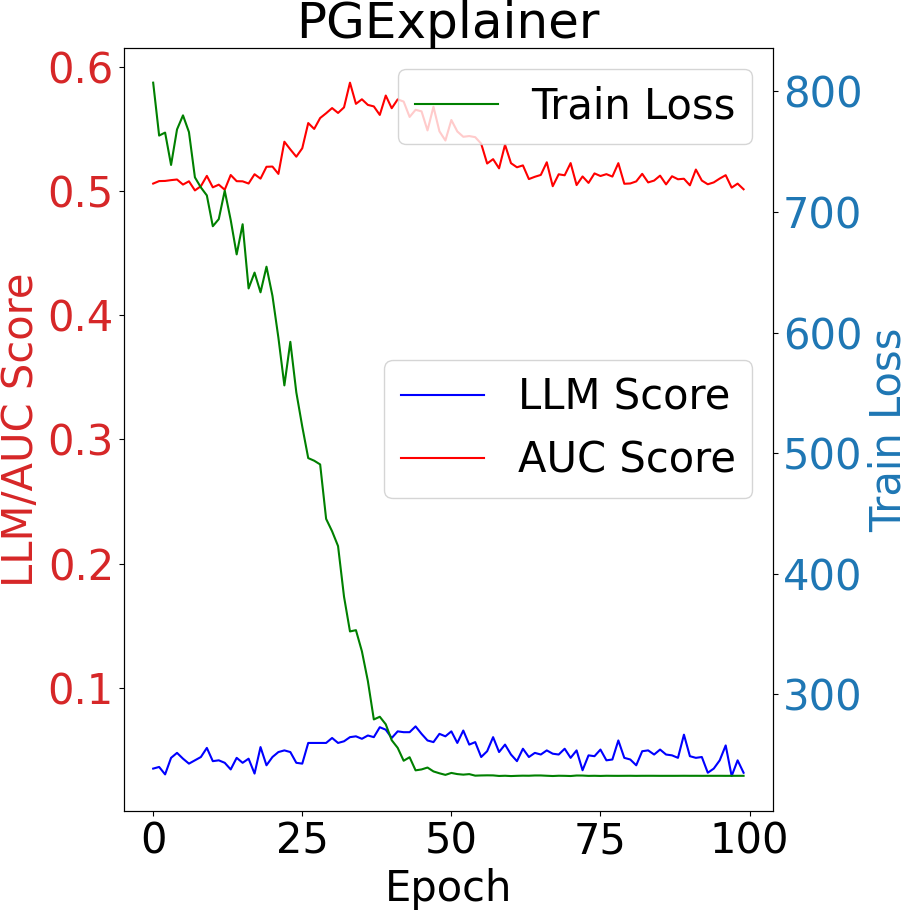}}   & 
        \subfigure{\includegraphics[width=0.19\textwidth]{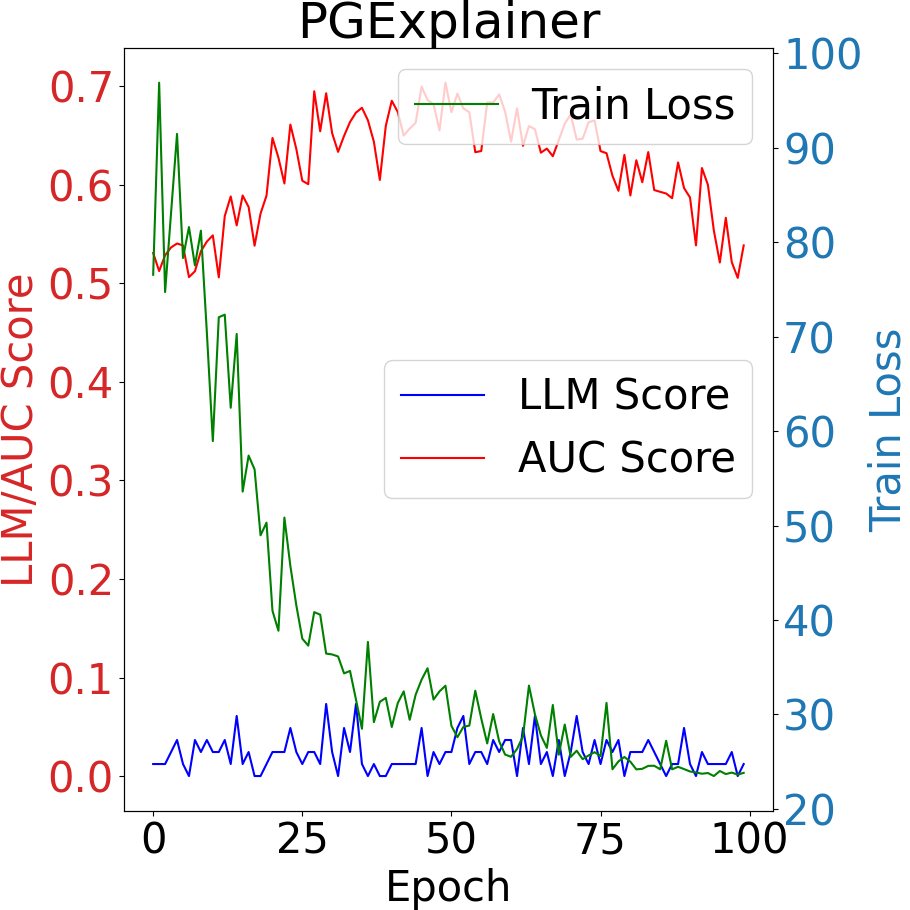}}  & 
        \subfigure{\includegraphics[width=0.19\textwidth]{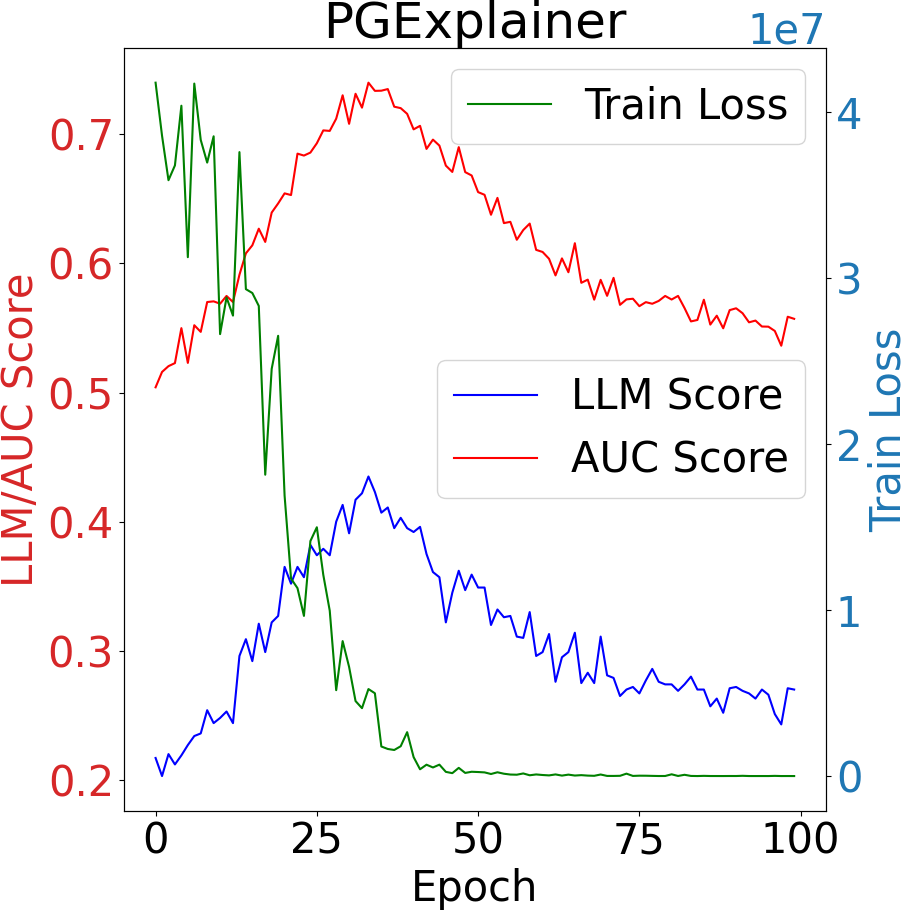}} & 
        \subfigure{\includegraphics[width=0.19\textwidth]{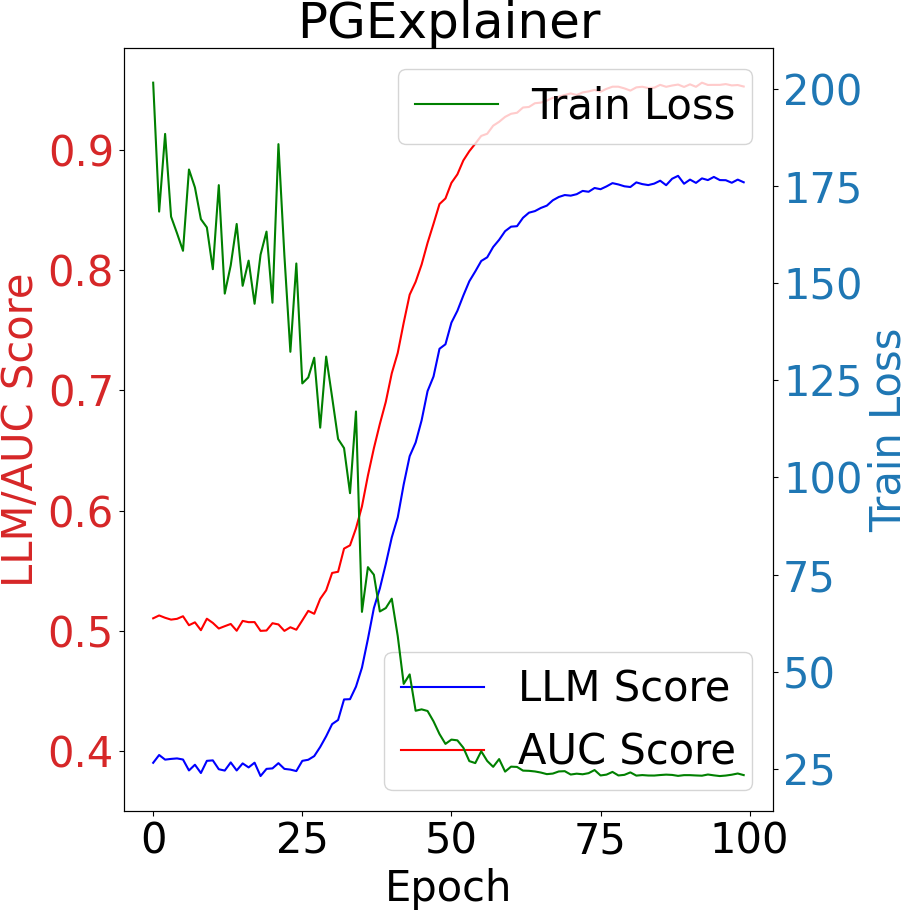}} \\
        \subfigure{\includegraphics[width=0.19\textwidth]{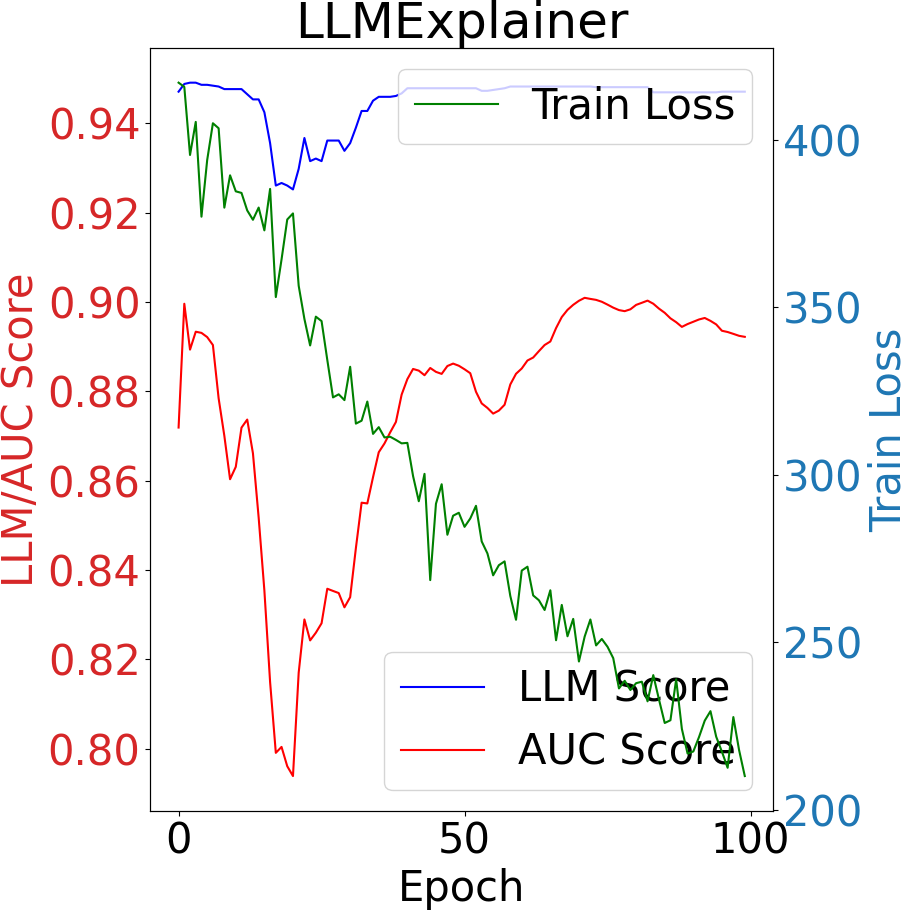}} &
        \subfigure{\includegraphics[width=0.19\textwidth]{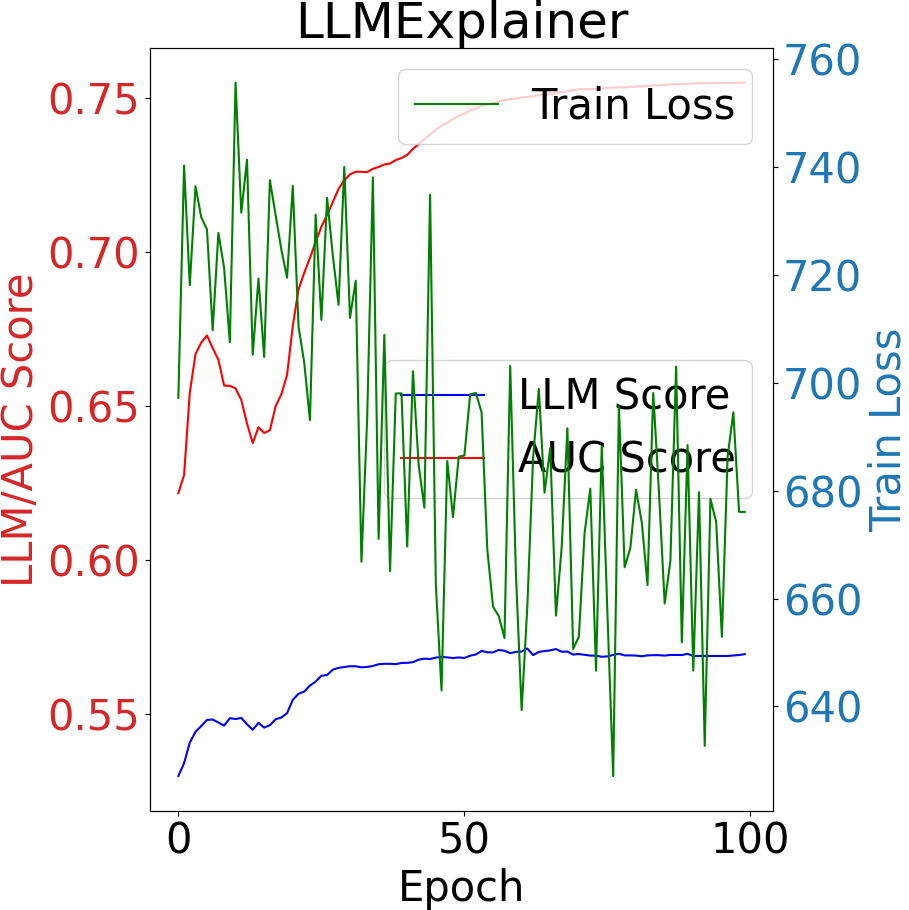}}  &
        \subfigure{\includegraphics[width=0.19\textwidth]{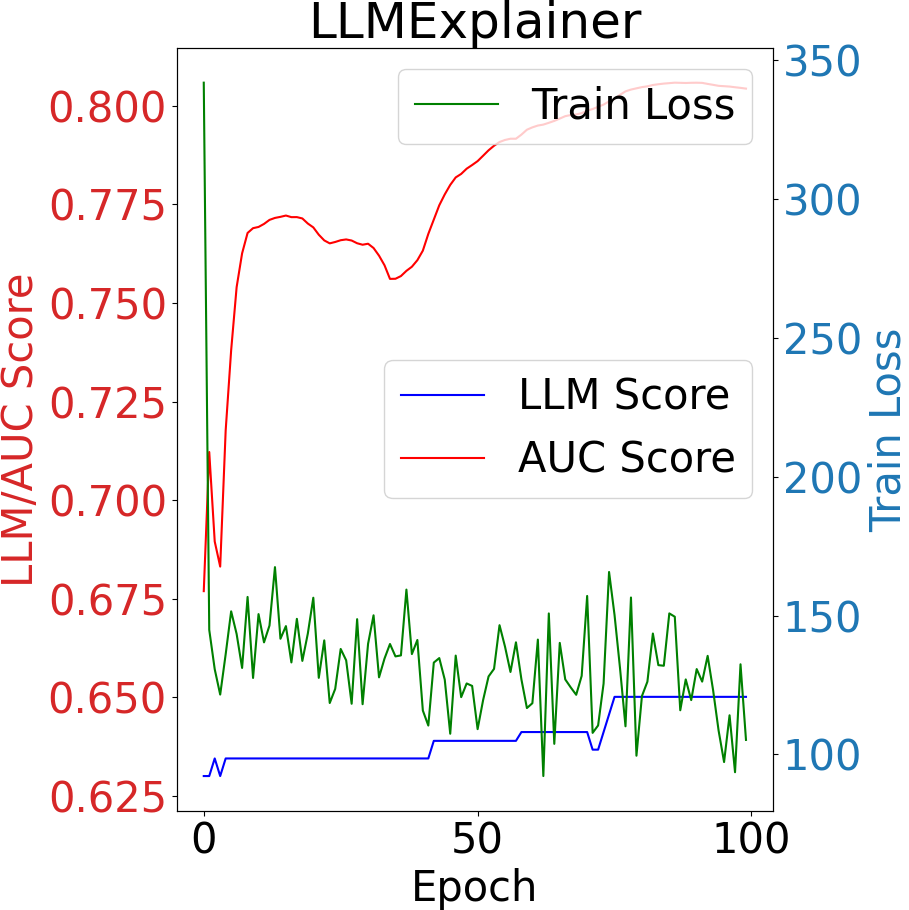}} &
        \subfigure{\includegraphics[width=0.19\textwidth]{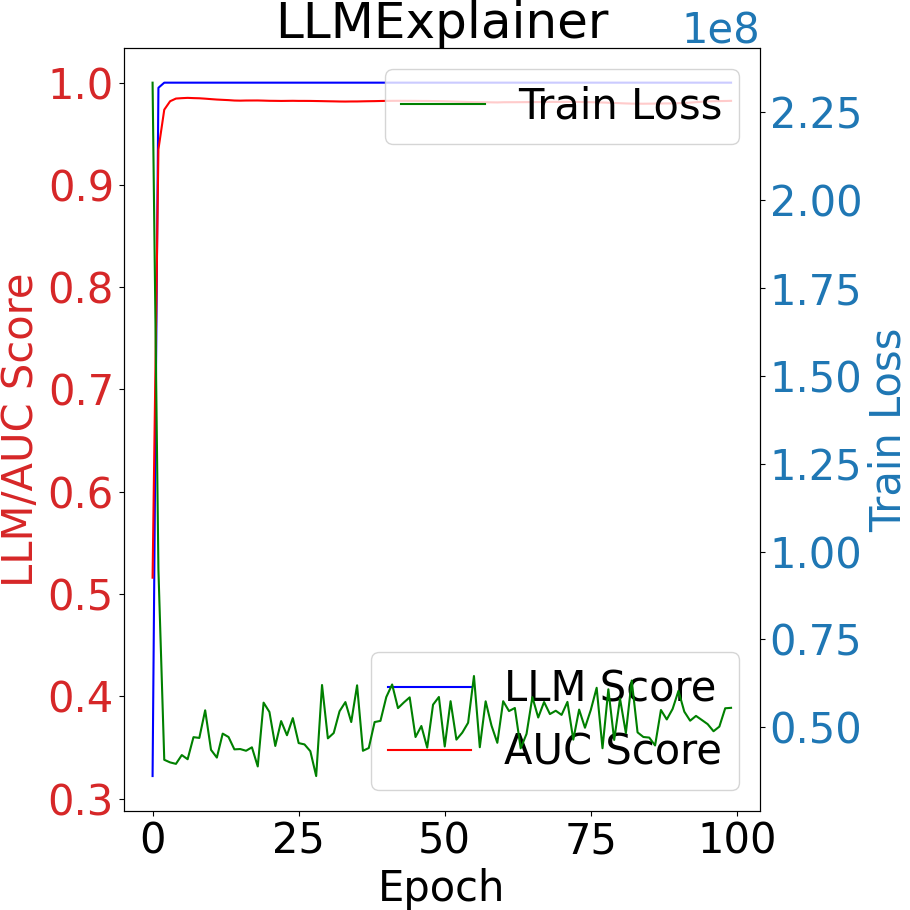}} &
        \subfigure{\includegraphics[width=0.19\textwidth]{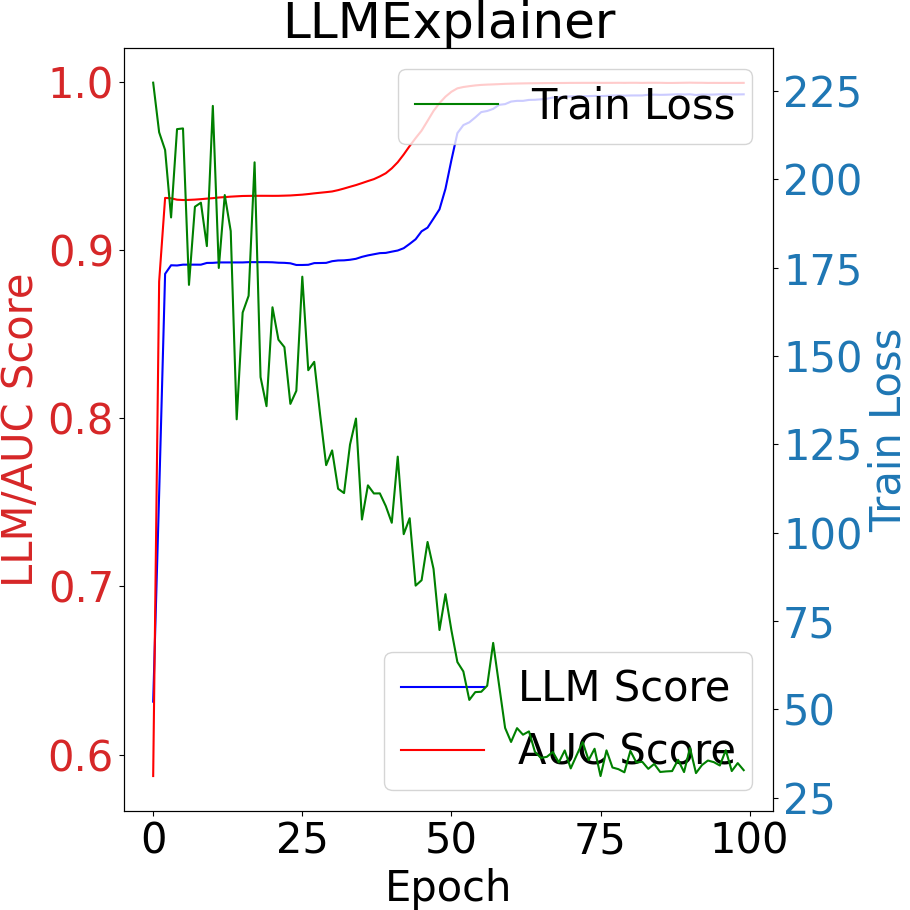}} \\
        {\tiny (a)~\mutag} & {\tiny (b)~\fluc} & {\tiny (c)~\alca} & {\tiny (d)~\bamv} & {\tiny (e)~\bamc} \\
    \end{tabular}
    }
    \vspace{-1mm}
    \caption{Visualization of AUC, LLM Score, and Training Loss curve by training epochs. We choose one seed for each explainer on each dataset.
    }
    \vspace{-1mm}
    \label{fig: RQ2}
\end{figure*}

\begin{table*}
  \vspace{-0mm}
  \label{tab:expl_auroc} 
  \scalebox{0.88}{
  \begin{tabular}{|c|ccccc|}
  \hline
                    & \mutag            & \fluc                 & \alca                 & \bamv                 & \bamc\\
    \hline
    Random Score    &$0.456_{\pm 0.280}$& $0.601_{\pm 0.058}$   & $0.683_{\pm 0.092}$   & $0.798_{\pm 0.111}$   & $0.887_{\pm 0.146}$\\
    LLM Score       &$0.763_{\pm 0.106}$& $0.719_{\pm 0.017}$   & $0.791_{\pm 0.003}$   & $0.979_{\pm 0.009}$   & $0.990_{\pm 0.004}$\\
    \hline
  \end{tabular}
  }
  \vspace{-0mm}
  \caption{Ablation study on the effectiveness of the LLM score. The results in the table compare the performance of the proposed method with the LLM score and random score.} 
  \vspace{-2mm}
  \label{tab: RQ3}
\end{table*}

As shown in Fig.~(\ref{fig: RQ2}), we visualize the training procedure of PGExplainer and {\ours} on five datasets. The first row is PGExplainer and the second row is {\ours}. From left to right, the five datasets are {\mutag}, {\fluc}, {\alca}, {\bamv}, and {\bamc} respectively. We visualize and compare the AUC performance, LLM score, and train loss of them. Additionally, the LLM score is not used in PGExplainer, we retrieve it with the explanation candidates during training. As we can observe in the {\bamv} dataset, the AUC performance and LLM score increase in the first 40 epochs. However, they drop persistently to around 0.5/0.3 after 100 epochs, indicating a learning bias problem. For LLMExplainer, based on PGExplainer, on the second row, we can observe that the AUC performance and LLM Score increase to 0.9+ and maintain themselves stably. This observation is similar in {\fluc} and {\alca} datasets. The LLM score slightly dropped at around epoch 40 and then recovered. The results show that our proposed framework {\ours} could effectively alleviate the learning bias problem during the explaining procedure. 
\subsection{Ablation Study (RQ3)}

In this section, we conduct the experiments on the ablation study of the {\ours}. When we have the LLM score, we may be concerned about whether or not this score could reflect the real performance of the explanation sub-graph candidates and whether it would work for the proposed framework. So, we compare the performance of the framework with the LLM score and random score. In the model with the random score, we replace the LLM grading module with a random number generator. As shown in Table~{\ref{tab: RQ3}}, the results show that the LLM score could effectively enhance the Bayesian Inference procedure in explaining.

\vspace{-2mm}
\section{Conclusion}
In this work, we proposed LLMExplainer, which incorporated the graph explaining the procedure with Bayesian Inference and the Large Language Model. We build the to-be-explained graph and the explanation candidate into the prompt and take advantage of the Large Language Model to grade the candidate. Then we use this score in the Bayesian Inference procedure. We conduct experiments on three real-world graph classification tasks and two synthetic graph regression tasks to demonstrate the effectiveness of our framework. By comparing the baselines and PGExplainer as the backbone, the results show the advantage of our proposed framework. Further experiments show that the LLM score is effective in the whole pipeline.

\section{Limitations}

While we acknowledge the effectiveness of our method, we also recognize its limitations. Specifically, although our approach theoretically and empirically proves the benefits of integrating the LLM agent into the explainer framework, the scope of the datasets and Large Language Models remains limited. To address this challenge, we plan to explore more real-world datasets and deploy additional LLMs for a more comprehensive evaluation in the future work.
\bibliography{colm2024_conference}

\begin{thebibliography}{64}
\providecommand{\natexlab}[1]{#1}

\bibitem[{Atwood and Towsley(2016)}]{atwood2016diffusion}
James Atwood and Don Towsley. 2016.
\newblock Diffusion-convolutional neural networks.
\newblock \emph{Advances in neural information processing systems}, 29.

\bibitem[{Baldassarre and Azizpour(2019)}]{bald19expgcn}
Federico Baldassarre and Hossein Azizpour. 2019.
\newblock Explainability techniques for graph convolutional networks.
\newblock \emph{arXiv preprint}.

\bibitem[{Bal{\i}n et~al.(2019)Bal{\i}n, Abid, and Zou}]{balin2019concrete}
Muhammed~Fatih Bal{\i}n, Abubakar Abid, and James Zou. 2019.
\newblock Concrete autoencoders: Differentiable feature selection and reconstruction.
\newblock In \emph{International conference on machine learning}, pages 444--453. PMLR.

\bibitem[{Brown et~al.(2020)Brown, Mann, Ryder, Subbiah, Kaplan, Dhariwal, Neelakantan, Shyam, Sastry, Askell et~al.}]{brown2020language}
Tom Brown, Benjamin Mann, Nick Ryder, Melanie Subbiah, Jared~D Kaplan, Prafulla Dhariwal, Arvind Neelakantan, Pranav Shyam, Girish Sastry, Amanda Askell, et~al. 2020.
\newblock Language models are few-shot learners.
\newblock \emph{Advances in neural information processing systems}, 33:1877--1901.

\bibitem[{Bruna et~al.(2013)Bruna, Zaremba, Szlam, and LeCun}]{bruna2013spectral}
Joan Bruna, Wojciech Zaremba, Arthur Szlam, and Yann LeCun. 2013.
\newblock Spectral networks and locally connected networks on graphs.
\newblock \emph{arXiv preprint arXiv:1312.6203}.

\bibitem[{Bubeck et~al.(2023)Bubeck, Chandrasekaran, Eldan, Gehrke, Horvitz, Kamar, Lee, Lee, Li, Lundberg et~al.}]{bubeck2023sparks}
S{\'e}bastien Bubeck, Varun Chandrasekaran, Ronen Eldan, Johannes Gehrke, Eric Horvitz, Ece Kamar, Peter Lee, Yin~Tat Lee, Yuanzhi Li, Scott Lundberg, et~al. 2023.
\newblock Sparks of artificial general intelligence: Early experiments with gpt-4.
\newblock \emph{arXiv preprint arXiv:2303.12712}.

\bibitem[{Chen et~al.(2024)Chen, Zhang, Ni, Li, Bian, Islam, Mondal, Wei, and Luo}]{chen2024generatingindistributionproxygraphs}
Zhuomin Chen, Jiaxing Zhang, Jingchao Ni, Xiaoting Li, Yuchen Bian, Md~Mezbahul Islam, Ananda~Mohan Mondal, Hua Wei, and Dongsheng Luo. 2024.
\newblock \href {https://arxiv.org/abs/2402.02036} {Generating in-distribution proxy graphs for explaining graph neural networks}.
\newblock \emph{Preprint}, arXiv:2402.02036.

\bibitem[{Chereda et~al.(2019)Chereda, Bleckmann, Kramer, Leha, and Beissbarth}]{chereda2019utilizing}
Hryhorii Chereda, Annalen Bleckmann, Frank Kramer, Andreas Leha, and Tim Beissbarth. 2019.
\newblock Utilizing molecular network information via graph convolutional neural networks to predict metastatic event in breast cancer.
\newblock In \emph{GMDS}, pages 181--186.

\bibitem[{Dai et~al.(2022)Dai, Jin, Liu, and Wang}]{dai2022towards}
Enyan Dai, Wei Jin, Hui Liu, and Suhang Wang. 2022.
\newblock Towards robust graph neural networks for noisy graphs with sparse labels.
\newblock In \emph{Proceedings of the Fifteenth ACM International Conference on Web Search and Data Mining}, pages 181--191.

\bibitem[{Dai and Wang(2021)}]{dai21towards}
Enyan Dai and Suhang Wang. 2021.
\newblock Towards self-explainable graph neural network.
\newblock \emph{arXiv preprint}.

\bibitem[{Dang et~al.(2024)Dang, Zhao, Li, Ma, Yu, and Zhu}]{dang2024realtime}
Bo~Dang, Wenchao Zhao, Yufeng Li, Danqing Ma, Qixuan Yu, and Elly~Yijun Zhu. 2024.
\newblock Real-time pill identification for the visually impaired using deep learning.

\bibitem[{Duvenaud et~al.(2015)Duvenaud, Maclaurin, Iparraguirre, Bombarell, Hirzel, Aspuru-Guzik, and Adams}]{duvenaud2015convolutional}
David~K Duvenaud, Dougal Maclaurin, Jorge Iparraguirre, Rafael Bombarell, Timothy Hirzel, Al{\'a}n Aspuru-Guzik, and Ryan~P Adams. 2015.
\newblock Convolutional networks on graphs for learning molecular fingerprints.
\newblock \emph{Advances in neural information processing systems}, 28.

\bibitem[{Fang et~al.(2024)Fang, Li, Fan, Xu, Nag, Korpeoglu, Kumar, and Achan}]{fang2024llm}
Chenhao Fang, Xiaohan Li, Zezhong Fan, Jianpeng Xu, Kaushiki Nag, Evren Korpeoglu, Sushant Kumar, and Kannan Achan. 2024.
\newblock Llm-ensemble: Optimal large language model ensemble method for e-commerce product attribute value extraction.
\newblock \emph{arXiv preprint arXiv:2403.00863}.

\bibitem[{Feng et~al.(2023)Feng, Qi, Shi, Mei, Zheng, and Wei}]{fan19social}
Zhiyuan Feng, Kai Qi, Bin Shi, Hao Mei, Qinghua Zheng, and Hua Wei. 2023.
\newblock Deep evidential learning in diffusion convolutional recurrent neural network.

\bibitem[{Gal and Ghahramani(2015)}]{gal2015bayesian}
Yarin Gal and Zoubin Ghahramani. 2015.
\newblock Bayesian convolutional neural networks with bernoulli approximate variational inference.
\newblock \emph{arXiv preprint arXiv:1506.02158}.

\bibitem[{Graves(2011)}]{graves2011practical}
Alex Graves. 2011.
\newblock Practical variational inference for neural networks.
\newblock \emph{Advances in neural information processing systems}, 24.

\bibitem[{Hamilton et~al.(2017)Hamilton, Ying, and Leskovec}]{hamilton2017inductive}
Will Hamilton, Zhitao Ying, and Jure Leskovec. 2017.
\newblock Inductive representation learning on large graphs.
\newblock \emph{Advances in neural information processing systems}, 30.

\bibitem[{He et~al.(2023)He, Bresson, Laurent, Perold, LeCun, and Hooi}]{he2023harnessing}
Xiaoxin He, Xavier Bresson, Thomas Laurent, Adam Perold, Yann LeCun, and Bryan Hooi. 2023.
\newblock Harnessing explanations: Llm-to-lm interpreter for enhanced text-attributed graph representation learning.
\newblock In \emph{The Twelfth International Conference on Learning Representations}.

\bibitem[{Jin et~al.(2023)Jin, Wang, Ma, Chu, Zhang, Shi, Chen, Liang, Li, Pan et~al.}]{jin2023time}
Ming Jin, Shiyu Wang, Lintao Ma, Zhixuan Chu, James~Y Zhang, Xiaoming Shi, Pin-Yu Chen, Yuxuan Liang, Yuan-Fang Li, Shirui Pan, et~al. 2023.
\newblock Time-llm: Time series forecasting by reprogramming large language models.
\newblock \emph{arXiv preprint arXiv:2310.01728}.

\bibitem[{Kazius et~al.(2005)Kazius, McGuire, and Bursi}]{kazius2005derivation}
Jeroen Kazius, Ross McGuire, and Roberta Bursi. 2005.
\newblock Derivation and validation of toxicophores for mutagenicity prediction.
\newblock \emph{Journal of medicinal chemistry}, 48(1):312--320.

\bibitem[{Kipf and Welling(2016)}]{kipf2016semi}
Thomas~N Kipf and Max Welling. 2016.
\newblock Semi-supervised classification with graph convolutional networks.
\newblock \emph{arXiv preprint arXiv:1609.02907}.

\bibitem[{Lai et~al.(2024)Lai, Zhang, and Chen}]{lai2024adaptive}
Zhixin Lai, Xuesheng Zhang, and Suiyao Chen. 2024.
\newblock Adaptive ensembles of fine-tuned transformers for llm-generated text detection.
\newblock \emph{arXiv preprint arXiv:2403.13335}.

\bibitem[{Lei et~al.(2022)Lei, Mei, Shi, and Wei}]{ijcai2018p0505}
Xiaoliang Lei, Hao Mei, Bin Shi, and Hua Wei. 2022.
\newblock Modeling network-level traffic flow transitions on sparse data.
\newblock In \emph{Proceedings of the 28th ACM SIGKDD Conference on Knowledge Discovery and Data Mining}, pages 835--845.

\bibitem[{Li et~al.(2017)Li, Cheng, Wang, Morstatter, Trevino, Tang, and Liu}]{Li17featureselect}
Jundong Li, Kewei Cheng, Suhang Wang, Fred Morstatter, Robert~P. Trevino, Jiliang Tang, and Huan Liu. 2017.
\newblock \href {https://doi.org/10.1145/3136625} {Feature selection: A data perspective}.
\newblock \emph{ACM Comput. Surv.}, 50(6).

\bibitem[{Li and Zhu(2021)}]{Li_Zhu_2021_traffic}
Mengzhang Li and Zhanxing Zhu. 2021.
\newblock Spatial-temporal fusion graph neural networks for traffic flow forecasting.
\newblock \emph{Proceedings of the AAAI Conference on Artificial Intelligence}, 35(5):4189--4196.

\bibitem[{Li et~al.(2022)Li, Zhou, Verma, and Chen}]{li2022survey}
Yiqiao Li, Jianlong Zhou, Sunny Verma, and Fang Chen. 2022.
\newblock A survey of explainable graph neural networks: Taxonomy and evaluation metrics.
\newblock \emph{arXiv preprint arXiv:2207.12599}.

\bibitem[{Liu et~al.(2024)Liu, Yang, and Neville}]{liu2024cliqueparcel}
Jiayi Liu, Tinghan Yang, and Jennifer Neville. 2024.
\newblock Cliqueparcel: An approach for batching llm prompts that jointly optimizes efficiency and faithfulness.
\newblock \emph{arXiv preprint arXiv:2402.14833}.

\bibitem[{Longa et~al.(2022)Longa, Azzolin, Santin, Cencetti, Li{\`o}, Lepri, and Passerini}]{longa2022explaining}
Antonio Longa, Steve Azzolin, Gabriele Santin, Giulia Cencetti, Pietro Li{\`o}, Bruno Lepri, and Andrea Passerini. 2022.
\newblock Explaining the explainers in graph neural networks: a comparative study.
\newblock \emph{arXiv preprint arXiv:2210.15304}.

\bibitem[{Luo et~al.(2020)Luo, Cheng, Xu, Yu, Zong, Chen, and Zhang}]{luo2020parameterized}
Dongsheng Luo, Wei Cheng, Dongkuan Xu, Wenchao Yu, Bo~Zong, Haifeng Chen, and Xiang Zhang. 2020.
\newblock Parameterized explainer for graph neural network.
\newblock \emph{Advances in neural information processing systems}, 33:19620--19631.

\bibitem[{MacKay(1992)}]{mackay1992bayesian}
David~JC MacKay. 1992.
\newblock Bayesian interpolation.
\newblock \emph{Neural computation}, 4(3):415--447.

\bibitem[{Mansimov et~al.(2019)Mansimov, Mahmood, and Kang}]{mansimov19deepggnn}
E.~Mansimov, O.~Mahmood, and S.~Kang. 2019.
\newblock \href {https://doi.org/10.1038/s41598-019-56773-5} {Molecular geometry prediction using a deep generative graph neural network}.

\bibitem[{Min et~al.(2021)Min, Gao, Peng, Wang, Qin, and Fang}]{min21social}
Shengjie Min, Zhan Gao, Jing Peng, Liang Wang, Ke~Qin, and Bo~Fang. 2021.
\newblock Stgsn — a spatial–temporal graph neural network framework for time-evolving social networks.
\newblock \emph{Knowledge-Based Systems}, 214:106746.

\bibitem[{M{\"u}ller et~al.(2021)M{\"u}ller, Hollmann, Arango, Grabocka, and Hutter}]{muller2021transformers}
Samuel M{\"u}ller, Noah Hollmann, Sebastian~Pineda Arango, Josif Grabocka, and Frank Hutter. 2021.
\newblock Transformers can do bayesian inference.
\newblock \emph{arXiv preprint arXiv:2112.10510}.

\bibitem[{Mysore et~al.(2023)Mysore, Lu, Wan, Yang, Menezes, Baghaee, Gonzalez, Neville, and Safavi}]{mysore2023pearl}
Sheshera Mysore, Zhuoran Lu, Mengting Wan, Longqi Yang, Steve Menezes, Tina Baghaee, Emmanuel~Barajas Gonzalez, Jennifer Neville, and Tara Safavi. 2023.
\newblock Pearl: Personalizing large language model writing assistants with generation-calibrated retrievers.
\newblock \emph{arXiv preprint arXiv:2311.09180}.

\bibitem[{Neelakantan et~al.(2015)Neelakantan, Vilnis, Le, Sutskever, Kaiser, Kurach, and Martens}]{neelakantan2015adding}
Arvind Neelakantan, Luke Vilnis, Quoc~V Le, Ilya Sutskever, Lukasz Kaiser, Karol Kurach, and James Martens. 2015.
\newblock Adding gradient noise improves learning for very deep networks.
\newblock \emph{arXiv preprint arXiv:1511.06807}.

\bibitem[{Rissanen(1978)}]{rissanen1978modeling}
Jorma Rissanen. 1978.
\newblock Modeling by shortest data description.
\newblock \emph{Automatica}, 14(5):465--471.

\bibitem[{Sanchez-Lengeling et~al.(2020)Sanchez-Lengeling, Wei, Lee, Reif, Wang, Qian, McCloskey, Colwell, and Wiltschko}]{sanchez2020evaluating}
Benjamin Sanchez-Lengeling, Jennifer Wei, Brian Lee, Emily Reif, Peter Wang, Wesley Qian, Kevin McCloskey, Lucy Colwell, and Alexander Wiltschko. 2020.
\newblock Evaluating attribution for graph neural networks.
\newblock \emph{Advances in neural information processing systems}, 33:5898--5910.

\bibitem[{Scarselli et~al.(2009)Scarselli, Gori, Tsoi, Hagenbuchner, and Monfardini}]{scarselli09gnnmodel}
Franco Scarselli, Marco Gori, Ah~Chung Tsoi, Markus Hagenbuchner, and Gabriele Monfardini. 2009.
\newblock The graph neural network model.
\newblock \emph{IEEE Transactions on Neural Networks}, 20(1):61--80.

\bibitem[{Shan et~al.(2021)Shan, Shen, Zhang, Li, and Li}]{rgexp}
Caihua Shan, Yifei Shen, Yao Zhang, Xiang Li, and Dongsheng Li. 2021.
\newblock \href {https://openreview.net/forum?id=nUtLCcV24hL} {Reinforcement learning enhanced explainer for graph neural networks}.
\newblock In \emph{Advances in Neural Information Processing Systems}.

\bibitem[{Song et~al.(2023)Song, Wu, Zhang, Peng, Dang, Pan, and Wu}]{song23c_interspeech}
Xingchen Song, Di~Wu, Binbin Zhang, Zhendong Peng, Bo~Dang, Fuping Pan, and Zhiyong Wu. 2023.
\newblock \href {https://doi.org/10.21437/Interspeech.2023-1497} {{ZeroPrompt: Streaming Acoustic Encoders are Zero-Shot Masked LMs}}.
\newblock In \emph{Proc. INTERSPEECH 2023}, pages 1648--1652.

\bibitem[{Song et~al.(2024)Song, Zhang, Lin, Cohen, Price, Zhang, Kim, Zhang, Xiong, and Aliaga}]{song2024imprint}
Yizhi Song, Zhifei Zhang, Zhe Lin, Scott Cohen, Brian Price, Jianming Zhang, Soo~Ye Kim, He~Zhang, Wei Xiong, and Daniel Aliaga. 2024.
\newblock Imprint: Generative object compositing by learning identity-preserving representation.
\newblock \emph{arXiv preprint arXiv:2403.10701}.

\bibitem[{Sorokin and Gurevych(2018)}]{sorokin-gurevych-2018-modeling-knowledgegraph}
Daniil Sorokin and Iryna Gurevych. 2018.
\newblock \href {https://aclanthology.org/C18-1280} {Modeling semantics with gated graph neural networks for knowledge base question answering}.
\newblock In \emph{Proceedings of the 27th International Conference on Computational Linguistics}, pages 3306--3317, Santa Fe, New Mexico, USA. Association for Computational Linguistics.

\bibitem[{Spinelli et~al.(2022)Spinelli, Scardapane, and Uncini}]{spinelli2022meta}
Indro Spinelli, Simone Scardapane, and Aurelio Uncini. 2022.
\newblock A meta-learning approach for training explainable graph neural networks.
\newblock \emph{IEEE Transactions on Neural Networks and Learning Systems}.

\bibitem[{Tang et~al.(2019)Tang, Li, and Yu}]{tang2019chebnet}
Shanshan Tang, Bo~Li, and Haijun Yu. 2019.
\newblock Chebnet: Efficient and stable constructions of deep neural networks with rectified power units using chebyshev approximations.
\newblock \emph{arXiv preprint arXiv:1911.05467}.

\bibitem[{Vaswani et~al.(2017)Vaswani, Shazeer, Parmar, Uszkoreit, Jones, Gomez, Kaiser, and Polosukhin}]{vaswani2017attention}
Ashish Vaswani, Noam Shazeer, Niki Parmar, Jakob Uszkoreit, Llion Jones, Aidan~N Gomez, {\L}ukasz Kaiser, and Illia Polosukhin. 2017.
\newblock Attention is all you need.
\newblock \emph{Advances in neural information processing systems}, 30.

\bibitem[{Wan et~al.(2024)Wan, Safavi, Jauhar, Kim, Counts, Neville, Suri, Shah, White, Yang et~al.}]{wan2024tnt}
Mengting Wan, Tara Safavi, Sujay~Kumar Jauhar, Yujin Kim, Scott Counts, Jennifer Neville, Siddharth Suri, Chirag Shah, Ryen~W White, Longqi Yang, et~al. 2024.
\newblock Tnt-llm: Text mining at scale with large language models.
\newblock \emph{arXiv preprint arXiv:2403.12173}.

\bibitem[{Wang et~al.(2023)Wang, Wu, Xie, and Kozlowski}]{wang2023scalable}
Chen Wang, Xu~Wu, Ziyu Xie, and Tomasz Kozlowski. 2023.
\newblock Scalable inverse uncertainty quantification by hierarchical bayesian modeling and variational inference.
\newblock \emph{Energies}, 16(22):7664.

\bibitem[{Wang et~al.(2024)Wang, Chen, Chen, Wu, Zhu, Zeng, Luo, Lu, Zhou, Qiao et~al.}]{wang2024visionllm}
Wenhai Wang, Zhe Chen, Xiaokang Chen, Jiannan Wu, Xizhou Zhu, Gang Zeng, Ping Luo, Tong Lu, Jie Zhou, Yu~Qiao, et~al. 2024.
\newblock Visionllm: Large language model is also an open-ended decoder for vision-centric tasks.
\newblock \emph{Advances in Neural Information Processing Systems}, 36.

\bibitem[{Wang et~al.(2021{\natexlab{a}})Wang, Wu, Zhang, He, and Chua}]{wang2021towards}
Xiang Wang, Ying{-}Xin Wu, An~Zhang, Xiangnan He, and Tat{-}Seng Chua. 2021{\natexlab{a}}.
\newblock Towards multi-grained explainability for graph neural networks.
\newblock In \emph{Advances in Neural Information Processing Systems 34: Annual Conference on Neural Information Processing Systems 2021, NeurIPS 2021, December 6-14, 2021, virtual}, pages 18446--18458.

\bibitem[{Wang et~al.(2021{\natexlab{b}})Wang, Wu, Zhang, He, and Chua}]{wang2021causal}
Xiang Wang, Yingxin Wu, An~Zhang, Xiangnan He, and Tat-seng Chua. 2021{\natexlab{b}}.
\newblock Causal screening to interpret graph neural networks.

\bibitem[{Wang et~al.(2020)Wang, Ma, Wang, Jin, Wang, Tang, Jia, and Yu}]{wang20traffic}
Xiaoyang Wang, Yao Ma, Yiqi Wang, Wei Jin, Xin Wang, Jiliang Tang, Caiyan Jia, and Jian Yu. 2020.
\newblock Traffic flow prediction via spatial temporal graph neural network.
\newblock In \emph{Proceedings of The Web Conference 2020}, WWW '20, page 1082–1092, New York, NY, USA. Association for Computing Machinery.

\bibitem[{Wei et~al.(2022)Wei, Wang, Schuurmans, Bosma, Xia, Chi, Le, Zhou et~al.}]{wei2022chain}
Jason Wei, Xuezhi Wang, Dale Schuurmans, Maarten Bosma, Fei Xia, Ed~Chi, Quoc~V Le, Denny Zhou, et~al. 2022.
\newblock Chain-of-thought prompting elicits reasoning in large language models.
\newblock \emph{Advances in neural information processing systems}, 35:24824--24837.

\bibitem[{Wu et~al.(2022)Wu, Li, Yu, Bian, Zhang, Chen, Hou, Fu, Chen, Xu et~al.}]{wu2022survey}
Bingzhe Wu, Jintang Li, Junchi Yu, Yatao Bian, Hengtong Zhang, CHaochao Chen, Chengbin Hou, Guoji Fu, Liang Chen, Tingyang Xu, et~al. 2022.
\newblock A survey of trustworthy graph learning: Reliability, explainability, and privacy protection.
\newblock \emph{arXiv preprint arXiv:2205.10014}.

\bibitem[{Wu et~al.(2024)Wu, Chen, Zhao, Sergazinov, Li, Liu, Zhao, Xie, Guo, Ji et~al.}]{wu2024switchtab}
Jing Wu, Suiyao Chen, Qi~Zhao, Renat Sergazinov, Chen Li, Shengjie Liu, Chongchao Zhao, Tianpei Xie, Hanqing Guo, Cheng Ji, et~al. 2024.
\newblock Switchtab: Switched autoencoders are effective tabular learners.
\newblock In \emph{Proceedings of the AAAI Conference on Artificial Intelligence}, volume~38, pages 15924--15933.

\bibitem[{Wu et~al.(2019)Wu, Pan, Long, Jiang, and Zhang}]{wu19graphwave}
Zonghan Wu, Shirui Pan, Guodong Long, Jing Jiang, and Chengqi Zhang. 2019.
\newblock Graph wavenet for deep spatial-temporal graph modeling.
\newblock In \emph{Proceedings of the 28th International Joint Conference on Artificial Intelligence}, IJCAI'19, page 1907–1913. AAAI Press.

\bibitem[{Xiao et~al.(2021)Xiao, Chen, Wang, and Wang}]{xiao2021learning}
Teng Xiao, Zhengyu Chen, Donglin Wang, and Suhang Wang. 2021.
\newblock Learning how to propagate messages in graph neural networks.
\newblock In \emph{Proceedings of the 27th ACM SIGKDD Conference on Knowledge Discovery \& Data Mining}, pages 1894--1903.

\bibitem[{Xue et~al.(2021)Xue, Yu, Xu, Liu, Hu, Ye, Geng, Liu, and Meng}]{xue2021bayesian}
Boyang Xue, Jianwei Yu, Junhao Xu, Shansong Liu, Shoukang Hu, Zi~Ye, Mengzhe Geng, Xunying Liu, and Helen Meng. 2021.
\newblock Bayesian transformer language models for speech recognition.
\newblock In \emph{ICASSP 2021-2021 IEEE International Conference on Acoustics, Speech and Signal Processing (ICASSP)}, pages 7378--7382. IEEE.

\bibitem[{Ying et~al.(2019)Ying, Bourgeois, You, Zitnik, and Leskovec}]{ying2019gnnexplainer}
Zhitao Ying, Dylan Bourgeois, Jiaxuan You, Marinka Zitnik, and Jure Leskovec. 2019.
\newblock Gnnexplainer: Generating explanations for graph neural networks.
\newblock \emph{Advances in neural information processing systems}, 32.

\bibitem[{Yuan et~al.(2022)Yuan, Yu, Gui, and Ji}]{yuan2022explainability}
Hao Yuan, Haiyang Yu, Shurui Gui, and Shuiwang Ji. 2022.
\newblock Explainability in graph neural networks: A taxonomic survey.
\newblock \emph{IEEE Transactions on Pattern Analysis and Machine Intelligence}.

\bibitem[{Yuan et~al.(2021)Yuan, Yu, Wang, Li, and Ji}]{subgraphx}
Hao Yuan, Haiyang Yu, Jie Wang, Kang Li, and Shuiwang Ji. 2021.
\newblock On explainability of graph neural networks via subgraph explorations.
\newblock In \emph{International Conference on Machine Learning}, pages 12241--12252. PMLR.

\bibitem[{Zhang et~al.(2022)Zhang, Wu, Yuan, Pan, Tong, and Pei}]{zhang2022trustworthy}
He~Zhang, Bang Wu, Xingliang Yuan, Shirui Pan, Hanghang Tong, and Jian Pei. 2022.
\newblock Trustworthy graph neural networks: Aspects, methods and trends.
\newblock \emph{arXiv preprint arXiv:2205.07424}.

\bibitem[{Zhang et~al.(2023{\natexlab{a}})Zhang, Chen, Mei, Luo, and Wei}]{zhang2023regexplainer}
Jiaxing Zhang, Zhuomin Chen, Hao Mei, Dongsheng Luo, and Hua Wei. 2023{\natexlab{a}}.
\newblock \href {https://arxiv.org/abs/2307.07840} {Regexplainer: Generating explanations for graph neural networks in regression task}.
\newblock \emph{Preprint}, arXiv:2307.07840.

\bibitem[{Zhang et~al.(2023{\natexlab{b}})Zhang, Luo, and Wei}]{zhang2023mixupexplainer}
Jiaxing Zhang, Dongsheng Luo, and Hua Wei. 2023{\natexlab{b}}.
\newblock \href {https://doi.org/10.1145/3580305.3599435} {Mixupexplainer: Generalizing explanations for graph neural networks with data augmentation}.
\newblock In \emph{Proceedings of the 29th ACM SIGKDD Conference on Knowledge Discovery and Data Mining}, KDD '23, page 3286–3296, New York, NY, USA. Association for Computing Machinery.

\bibitem[{Zhou et~al.(2022)Zhou, Muresanu, Han, Paster, Pitis, Chan, and Ba}]{zhou2022large}
Yongchao Zhou, Andrei~Ioan Muresanu, Ziwen Han, Keiran Paster, Silviu Pitis, Harris Chan, and Jimmy Ba. 2022.
\newblock Large language models are human-level prompt engineers.
\newblock \emph{arXiv preprint arXiv:2211.01910}.

\end{thebibliography}

\clearpage
\section{Appendix}

\subsection{Error loss $L^E(\beta, s)$}
The full proof of Equation \ref{eq: error_loss} towards the error loss $L^E(\beta, s)$ is
\begin{equation}
    \begin{aligned}
        L^E(\beta, s) &=\mathbb{E}_ {\hat{G}\sim~ Q(\beta)}L(\hat{G},s)\\
        &= -\int P(\hat{G}|\beta)\cdot \ln Pr(s|\hat{G})d\hat{G}\\
        &\text{When $s\to 1$, then we have}\\
        &\stackrel{s\to 1}{\approx} -\int P(\hat{G}|\alpha)\cdot \ln Pr(s|\hat{G})d\hat{G}\\
        &=-\int sP(\hat{G_0}|\alpha)\cdot \\
        & \quad \ln Pr\left(s| s\hat{G_0}+(1-s)G^\mathcal{N}\right)d\hat{G_0}\\
        &=-\int sP(\hat{G_0}|\alpha)\cdot \ln Pr\left(s|\hat{G_0}\right)d\hat{G_0}\\
    \end{aligned}
\end{equation}
\subsection{Network loss $L^C(\beta, \alpha)$} 

The full proof of Equation \ref{eq: network_loss} towards the network loss $L^C(\beta, \alpha)$ is
\begin{equation} 
    \begin{aligned}
        L^C(\beta, \alpha) &\sim KL[sg_{\alpha}(G) + (1-s) G^\mathcal{N}|| g_{\alpha}(G)]\\
        &=\int (s\hat{G}_0 + (1-s)G^\mathcal{N}) \log\left(\frac{s\hat{G}_0 + (1-s)G^\mathcal{N}}{\hat{G}_0}\right) d\hat{G}_0\\
        &=\int (s\hat{G}_0 + (1-s)G^\mathcal{N}) \log\left(s + (1-s)\frac{G^\mathcal{N}}{\hat{G}_0}\right) d\hat{G}_0\\
        &=\int (s\hat{G}_0 + (1-s)G^\mathcal{N}) \log s\left(1 + \frac{1-s}{s}\frac{G^\mathcal{N}}{\hat{G}_0}\right) d\hat{G}_0\\
        &=\int (s\hat{G}_0 + (1-s)G^\mathcal{N}) \left(\log s + \log\left(1 + \frac{1-s}{s}\frac{G^\mathcal{N}}{\hat{G}_0}\right)\right) d\hat{G}_0\\
        &\text{When $s\to 1, \frac{1-s}{s}\frac{\hat{G}_0}{G^\mathcal{N}} \to 0$, with Taylor Series, then we have}\\ 
        &\stackrel{s\to 1}{\approx}\int (s\hat{G}_0 + (1-s)G^\mathcal{N}) \left(\log s + \frac{1-s}{s}\frac{G^\mathcal{N}}{\hat{G}_0}\right) d\hat{G}_0\\
        &=\int s\hat{G}_0\left(\log s + \frac{1-s}{s}\frac{G^\mathcal{N}}{\hat{G}_0}\right) d\hat{G}_0\\
        &=\int s\hat{G}_0\log s + (1-s)G^\mathcal{N}d\hat{G}_0\\
        &=\int s\hat{G}_0\log s + (1-s)G^\mathcal{N}d\hat{G}_0\\
        &=\int s\hat{G}_0\log (1- (1-s)) + (1-s)G^\mathcal{N}d\hat{G}_0\\
        &=\int -(1-s)s\hat{G}_0 + (1-s)G^\mathcal{N}d\hat{G}_0\\
        &\approx \int (1-s)(G^\mathcal{N}-\hat{G}_0)d\hat{G}_0\\
    \end{aligned}
\end{equation}

\subsection{Symbol Table}
\begin{table*}
  \label{tab: symbols} 
  \begin{center}
  \scalebox{0.9}{
  \begin{tabular}{|c|p{10cm}|}
  \hline
        Symbol Name & Symbol Meaning\\
        \hline
        $G$ & original to-be-explained graph  \\
        $Y$ & prediction label for $G$ \\
        $G^*$ & optimized sub-graph explanation  \\
        $Y^*$ & prediction label for $G^*$  \\
        $I(\cdot)$ & Mutual Information  \\
        $\mathcal{V}$ & Node set  \\
        $\mathcal{E}$ & Edge set\\
        $\mX$ & Feature matrix  \\
        $\mA$ & Adjacency matrix  \\
        $n$ & Number of nodes  \\
        $d$ & Dimension of feature  \\
        $i$ & The $i$-th node   \\
        $j$ & The $j$-th node  \\
        $A^{ij}$ & edge from node i to node j, not used later \\
        $\Bar{Y}$ & ground-truth label for graph $G$ \\
        $f$ & to-be-explained GNN model \\
        $\hat G$ &  The sub-graph generated by explainer \\
        $\hat{G}_0$ & The sub-graph generated by original explainer  \\
        $g_\alpha$ & original explainer  \\
        $g'_\beta$ &  GNN explainer with LLM embedded \\
        $\alpha^*$&  The parameters of original explainer $g_\alpha$ \\
        $\beta^*$ & The parameters of LLM embedded explainer $g'_\beta$  \\
        $\phi$ & Large Language Model  \\
        $\lambda$ & Hyper-parameter for the trade-off between size constraint and mutual information  \\
        $\hat{\mathcal{G}}$ & Set of $\hat G$  \\
        
        $H$ &  The loss function for GNN explainer \\
       
        $\hat{Y}_0$ &  Prediction Label for $\hat G_0$ \\
        $\hat Y$ & Prediction Label for $\hat G$ \\
        $s$ & LLM score, fitting score, grading   \\
        
        $G^\mathcal{N}$ &  The random noise graph with $G^\mathcal{N}\sim \mathcal{N}(0,1)$ \\
        
        $Q$ & The distribution of $\beta$  \\
        $F$ &  Variational energy, proposed by \citet{graves2011practical} \\
        $L(\hat G, s)$&  Network loss, proposed by \citet{graves2011practical} \\
        $L^E(\beta, s)$ &   Error loss, proposed by \citet{graves2011practical}\\
        $L^C(\beta, s)$&  Complexity loss, proposed by \citet{graves2011practical} \\
        $KL(\cdot)$ & KL divergence  \\
        $L(f, \beta, s)$ &  Minimum description length form of variational energy $F$\citep{rissanen1978modeling, graves2011practical} \\
        $\Delta$&  The gradient of $F$ toward $\hat{G_0}$ with $\Delta =\frac{\partial F}{\partial \hat{G_0}}$ \\
        \hline
  \end{tabular}
  }
  \caption{Important notations and symbols table.
  } 
  \label{app:tab:notation}
  \end{center}
\end{table*}

\subsection{Training Algorithm}

\renewcommand{\algorithmicrequire}{\textbf{Input:}}
\renewcommand{\algorithmicensure}{\textbf{Output:}}

\begin{algorithm}
	\caption{Training Algorithm} 
	\begin{algorithmic}[1]
        \Require Graph dataset $\gG$, explanation generator $g$. 
        \Ensure Trained explanation generator $g_\alpha$.
        \State Initialize explanation generator $g_{\alpha^{(0)}}$.
        \For {$t \in epochs$}
            \For{$G \in \mathcal{G}$}
                \State $\hat{G}_0^{(t)} \gets g_{\alpha^{(t)}}(G)$
                \State Prompt Query $q_t \gets $ prompting $(G, \hat{G}_0^{(t)})$ into sequence
                \State $\vs_t \gets \phi(q_t)$
                \State $G^\mathcal{N} \gets $ randomly sampled graph noise
                \State $\hat{G}^{(t)} \gets \vs_t \hat{G}_0^{(t)} + (1 - \vs_t) G^\mathcal{N}$
                \State Compute ${L}_{\text{GIB}}$
            \EndFor
            \State Update $g_{\alpha^{(t)}}$ with loss back propagation.
        \EndFor
        \State \Return Trained explanation generator $g_\alpha$.
    \end{algorithmic} 
    \label{alg: train}
\end{algorithm}

\end{document}